%% file: paper.tex
\documentclass{article}
\usepackage{iclr2027_conference,times}
\usepackage{amsmath,amssymb,amsthm,booktabs,multirow,array}
\usepackage[table]{xcolor}
\definecolor{mergeheirrow}{RGB}{245,240,250}
\usepackage{graphicx}
\usepackage{hyperref}
\hypersetup{hidelinks}
\usepackage{url}
\usepackage{float}
\usepackage{subcaption}

\newtheorem{theorem}{Theorem}[section]
\newtheorem{proposition}[theorem]{Proposition}

\newtheorem{corollary}[theorem]{Corollary}

\newcolumntype{C}[1]{>{\centering\arraybackslash}p{#1}}

\title{MergeHEIR: Mitigating Multimodal Hallucinations as the Tax of Model Merging}
\author{\normalfont
Jinyu Li\textsuperscript{1,*} \quad
Hao Fang\textsuperscript{2,*} \quad
Zhiming Zhang\textsuperscript{1} \quad
Jiawei Kong\textsuperscript{2} \quad
Bin Chen\textsuperscript{1} \quad
Shu-Tao Xia\textsuperscript{2} \\
\textsuperscript{1}Harbin Institute of Technology, Shenzhen \\
\textsuperscript{2}Tsinghua Shenzhen International Graduate School, Tsinghua University \\
\textsuperscript{*}Equal contribution
}

\iclrfinalcopy

\begin{document}
\maketitle
\lhead{}

\input{secs/0_abstract}
\input{secs/1_introduction}
\input{secs/3_problem_setup}
\input{secs/4_method}
\input{secs/5_experiments}
\input{secs/6_conclusion}

\bibliographystyle{iclr2027_conference}
\bibliography{references}

\clearpage
\appendix
\input{secs/2_related_work}
\input{secs/A_merging_tax_analysis}
\input{secs/A_theoretical_analysis}
\input{secs/A_experimental_details}
\input{secs/A_additional_results}

\clearpage
\input{secs/A_case_study}
\input{secs/X_statements}
\input{secs/A_llm_use}

\end{document}

%% file: secs/0_abstract.tex

\begin{abstract}
Model merging consolidates task-specialized experts into a single deployable model. However, we show that such capability consolidation incurs a merging tax of increased hallucination: across 8 model-merging methods, every merged checkpoint exhibits a higher hallucination rate than the average of its constituent experts. An intuitive approach is to adapt existing hallucination-mitigation methods to the post-merge model, yet this unconstrained adaptation disrupts inherited capabilities, creating a tension between hallucination mitigation and expertise retention. 
To tackle this challenge, we introduce MergeHEIR, a post-merge adaptation framework designed to reduce this merging tax while preserving expertise inherited from initial experts. Using small expert-task calibration sets, MergeHEIR constructs layer-wise null-space projectors via SVD from task-specific activations collected from the merged checkpoint, and periodically projects the accumulated post-merge displacement onto the resulting null spaces to preserve inherited expertise.
Theoretically, we establish minimum-distortion and maximum-dimensionality guarantees, characterize the threshold-controlled adaptation-retention trade-off, and extend perturbation guarantees beyond finite calibration data. Across 24 paired comparisons spanning three MLLM configurations and 8 model-merging methods, MergeHEIR consistently mitigates hallucination while largely preserving inherited expertise, demonstrating a more favorable hallucination-retention trade-off.
\end{abstract}

%% file: secs/1_introduction.tex
\section{Introduction}

Model merging consolidates independently specialized models into a single deployable checkpoint, providing a practical way to combine complementary capabilities without jointly retraining a unified model on all constituent data \citep{wortsman2022model,ilharco2023editing}. This paradigm is particularly appealing for multimodal large language models (MLLMs), for which capabilities such as chart understanding, geometry, optical character recognition, and visual question answering are often distributed across specialized checkpoints \citep{wei2025optmerge}. However, capability consolidation alone does not guarantee deployment reliability.

We first investigate visual hallucination after model merging and reveal a systematic \emph{merging tax}: across eight merging methods and two Qwen2.5-VL \citep{bai2025qwen2} scales, every evaluated merged checkpoint exhibits a higher AMBER hallucination rate than the average of its constituent experts, as shown in Figure~\ref{fig:preliminary-merge-hallucination}. The mean rate increases from $33.48$ to $54.35$ for Qwen2.5-VL-3B and from $20.83$ to $25.41$ for Qwen2.5-VL-7B. Notably, all constituent experts share the same frozen vision encoder and differ only through language-side LoRA specialization. We attribute the resulting tax to \emph{visual-evidence utilization collision}: expert specialization forms co-adapted language-side computations for using visual evidence, while model merging can perturb the cross-layer interactions through which this evidence influences generation. As a result, the encoded visual representation may remain intact while its downstream use becomes less effective, increasing susceptibility to visually unsupported generation. Appendix~\ref{app:merging-tax-mechanism} provides a mechanistic interpretation of this effect. These results show that consolidating complementary capabilities does not guarantee reliable visual grounding, making hallucination correction a necessary post-merge stage.

\begin{figure*}[t]
\centering
\includegraphics[width=0.98\textwidth]{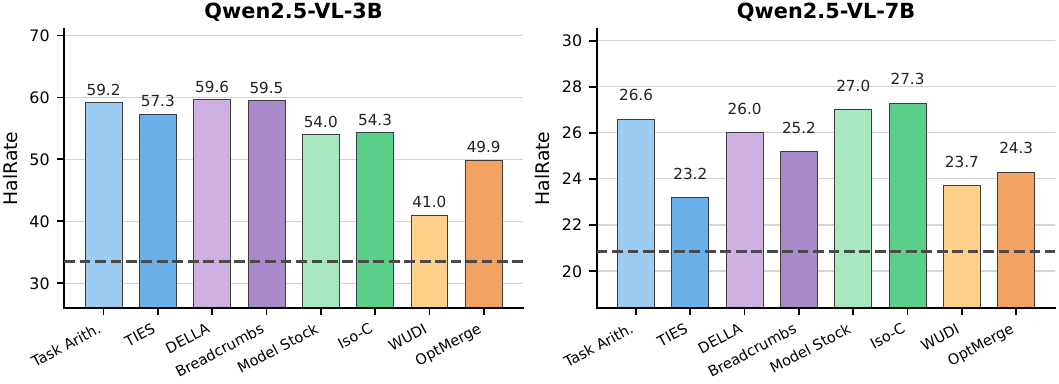}
\caption{Merging tax manifested as elevated hallucination rates. We evaluate eight merging methods with AMBER \citep{wang2023amber} on Qwen2.5-VL-3B and Qwen2.5-VL-7B. Dashed lines indicate the corresponding expert-average baselines, with HalRates of 33.5 and 20.8, respectively. Every evaluated merged checkpoint lies above its corresponding baseline.}
\label{fig:preliminary-merge-hallucination}
\vspace{-0.5em}
\end{figure*}

\begin{figure*}[t]
\centering
\includegraphics[width=0.96\textwidth]{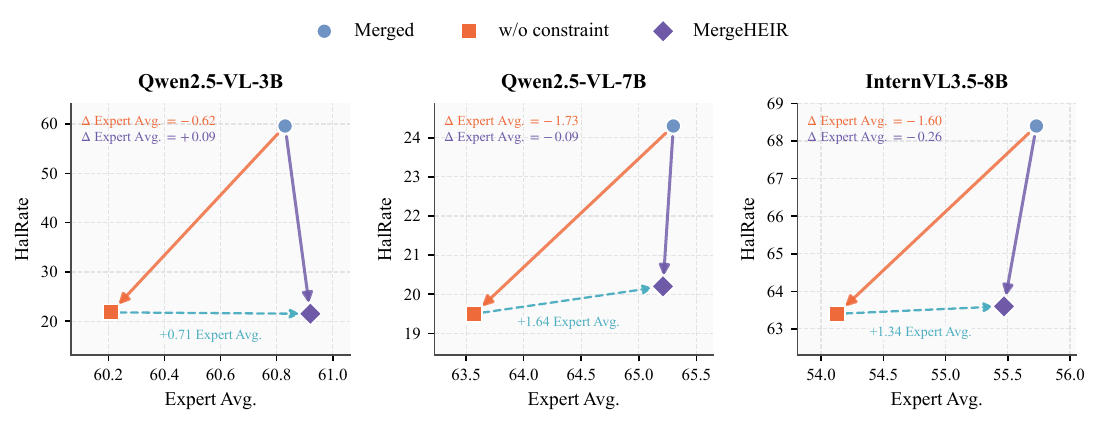}
\caption{Matched comparison of post-merge adaptation outcomes across three model configurations. HalRate denotes the AMBER hallucination rate, while Expert Avg. is the mean performance across the seven expert-task benchmarks. Unconstrained adaptation achieves comparable hallucination mitigation but incurs a substantially larger reduction in Expert Avg., whereas MergeHEIR better preserves inherited expertise while achieving effective hallucination mitigation.}
\label{fig:retention-tradeoff}
\vspace{-0.5em}
\end{figure*}

A natural remedy is to adapt the merged model through hallucination-oriented multimodal preference optimization \citep{wang2024mdpo}. Yet this correction introduces another problem. The same updates used to strengthen visual grounding can inadvertently disrupt the heterogeneous capabilities inherited from the constituent experts. Unconstrained adaptation that optimizes hallucination alone can therefore degrade expert-task performance. Figure~\ref{fig:retention-tradeoff} visualizes this tension across the three model configurations: unconstrained adaptation achieves hallucination mitigation comparable to MergeHEIR but consistently incurs a larger reduction in Expert Avg. This degradation reflects adaptation-induced interference with the expertise inherited from the constituent models. The resulting challenge is not merely to reduce hallucination, but to do so without undoing the capabilities that merging was intended to consolidate. Post-merge hallucination mitigation therefore cannot be treated as behavioral correction alone; it must improve visual grounding while limiting degradation of the inherited expert capabilities. This leads to our central question: \emph{how can hallucination-oriented preference optimization mitigate post-merge hallucination while limiting interference with the expert capabilities consolidated through model merging?}

To address this question, we propose MergeHEIR, a post-merge adaptation framework that coordinates hallucination correction with the retention of inherited expertise. It combines hallucination-oriented preference optimization with activation-derived null-space projection to suppress adaptation-induced interference with inherited expertise. Using only a small calibration set, MergeHEIR constructs null spaces from the merged model's activation statistics and periodically projects accumulated post-merge updates onto the corresponding null spaces to preserve expert-task capabilities. We further show that this is a minimum-distortion projection onto a maximum-dimensional admissible subspace, derive pooled and task-specific representation perturbation bounds, characterize the threshold-controlled adaptation--retention trade-off, and extend the guarantee to the corresponding activation distribution.

We evaluate MergeHEIR through 24 paired comparisons spanning three MLLM configurations and eight merging methods. Across these settings, MergeHEIR consistently mitigates hallucination while largely preserving the expert capabilities consolidated through model merging. Matched comparisons against unconstrained post-merge adaptation further show that null-space projection consistently improves expert retention while maintaining effective hallucination mitigation, yielding a more favorable hallucination--retention trade-off.

\noindent{Our contributions are threefold:}
\begin{itemize}
\item We identify a two-stage reliability problem in multimodal model merging: capability consolidation incurs systematic \emph{merging tax} from disrupted language-side visual-evidence utilization, while unconstrained hallucination correction can interfere with inherited expertise.
\item We propose MergeHEIR, a capability-preserving framework that constructs SVD-derived activation-insensitive subspaces and periodically projects accumulated updates onto them. We also establish theoretical guarantees on adaptation geometry and perturbation control.
\item We validate the effectiveness and robustness of MergeHEIR through 24 paired comparisons across three MLLM configurations and eight merging methods, complemented by controlled comparisons and threshold analyses that isolate the role of the constraint.
\end{itemize}

%% file: secs/3_problem_setup.tex
\section{Problem Formulation}
\label{sec:problem}

Figure~\ref{fig:preliminary-merge-hallucination} reveals a systematic \emph{merging tax}: across the evaluated merging methods, the resulting checkpoints exhibit consistently higher hallucination rates than their corresponding constituent-expert averages. This motivates hallucination-oriented post-merge adaptation. However, such adaptation modifies parameters that also carry the consolidated expert capabilities, creating a conflict between hallucination correction and expert retention. Post-merge adaptation thus poses a two-objective problem: improving visual grounding while retaining expert-task performance. We formulate this problem as hallucination correction under expert-retention constraints.

\subsection{Merged model and retention targets}
Let $\mathcal{T}=\{\mathrm{Chart},\mathrm{Geometry},\mathrm{OCR},\mathrm{VQA}\}$ denote the set of expert tasks, and let $\theta_t$ denote the parameters of the expert model specialized for task $t\in\mathcal{T}$. A model-merging operation combines these expert parameters into a single checkpoint:
\begin{equation}
\theta_m=\operatorname{Merge}\!\left(\{\theta_t\}_{t\in\mathcal{T}}\right).
\end{equation}
Post-merge adaptation initializes the trainable parameters $\theta$ from $\theta_m$, while $\theta_m$ remains fixed as the reference checkpoint. For each task $t$, let $\mathcal{D}_t$ denote a representative expert-task distribution. The collection $\{\mathcal{D}_t\}_{t\in\mathcal{T}}$ captures the heterogeneous expert capabilities that should remain stable.

\subsection{Hallucination correction under expert-retention constraints}

Let $\mathcal{L}_{\mathrm{hall}}(\theta)$ denote a hallucination-oriented adaptation objective applied to the merged checkpoint. Optimizing this objective alone imposes no restriction on behavioral changes over the expert-task distributions. We therefore formulate post-merge adaptation as
\begin{equation}
\min_{\theta}\;\mathcal{L}_{\mathrm{hall}}(\theta)
\quad
\text{s.t.}\quad
\mathcal{R}_t(\theta,\theta_m)\leq\epsilon_t,
\qquad \forall t\in\mathcal{T},
\end{equation}
where $\mathcal{R}_t(\theta,\theta_m)$ measures the adaptation-induced functional deviation from the original merged checkpoint on $\mathcal{D}_t$, and $\epsilon_t$ denotes the tolerated deviation. MergeHEIR instantiates these retention constraints through activation-derived null-space projection, as described in the next section.

%% file: secs/4_method.tex
\section{Method}
\label{sec:method}

MergeHEIR combines hallucination-oriented multimodal preference optimization with activation-derived null-space projection. The preference objective drives hallucination mitigation, while the null-space constraint restricts accumulated parameter changes to directions that produce limited perturbation on the expert-task calibration data. Figure~\ref{fig:overview} summarizes the complete framework.

\begin{figure*}[t]
\centering
\includegraphics[width=0.98\textwidth]{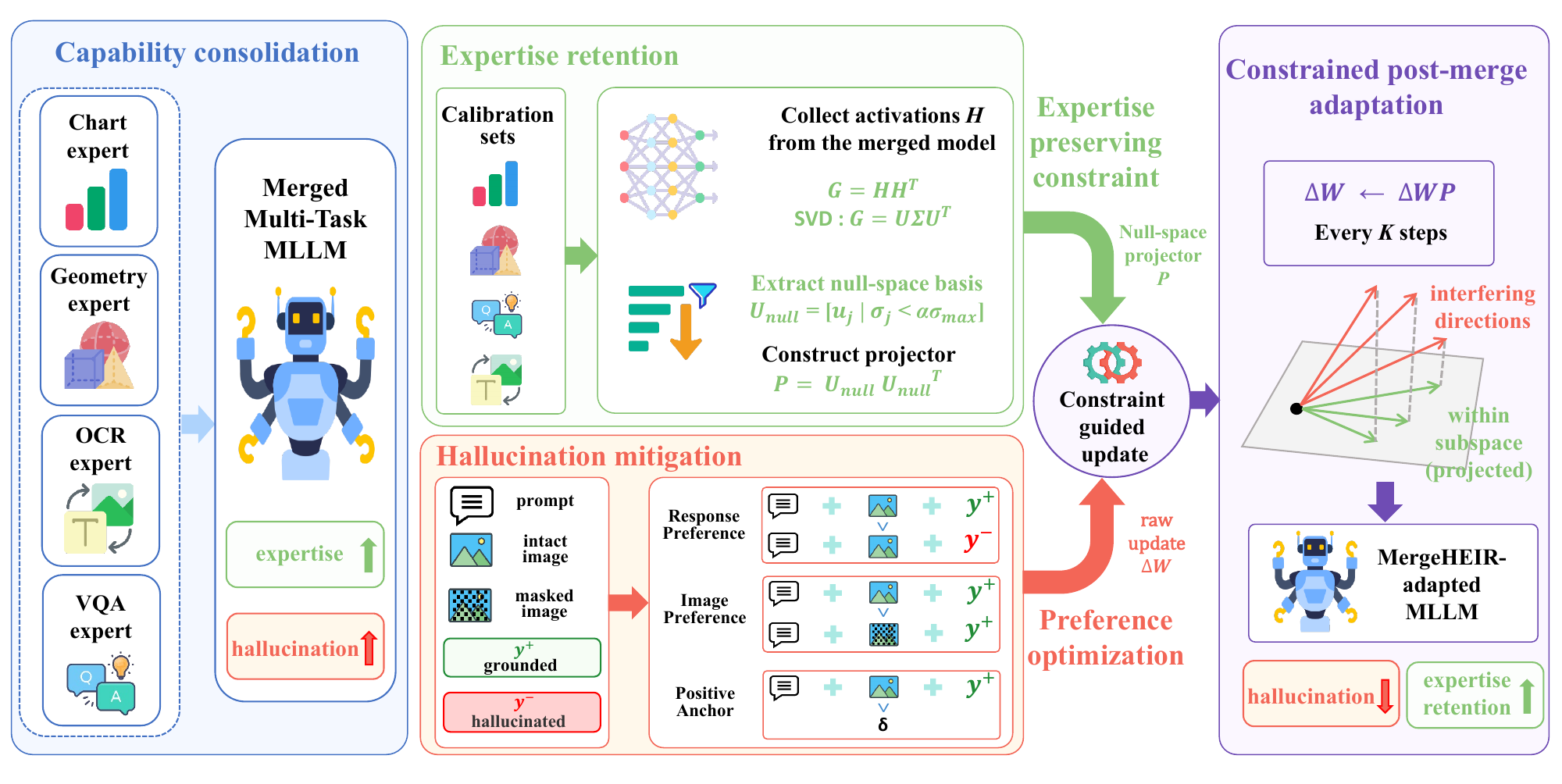}
\caption{Overview of MergeHEIR. Four task-specialized experts are first consolidated into a merged checkpoint $\theta_m$. Hallucination-oriented preference optimization produces corrective updates, while expert-task calibration activations define layer-wise null-space projectors. MergeHEIR periodically projects the accumulated parameter displacement onto the corresponding null spaces, reducing hallucination while retaining inherited expertise.}
\label{fig:overview}
\end{figure*}

\subsection{Hallucination-oriented preference optimization}
\label{sec:preference-objective}

Let $\pi_\theta$ denote the trainable policy initialized from the merged checkpoint $\theta_m$, and let $\pi_m$ denote the frozen reference policy. Each preference sample is given by $(x,I,y^+,y^-)$, where $y^+$ is grounded and $y^-$ is hallucinated. Following mDPO \citep{wang2024mdpo}, we define the reference-relative score as
$s_\theta(y\mid x,I)=\beta\log\frac{\pi_\theta(y\mid x,I)}{\pi_m(y\mid x,I)}$.
To strengthen image-conditioned preference, we further construct a perturbed image $\widetilde I$ via random region masking following OPA-DPO \citep{yang2025mitigating}. The preference objective consists of
\begin{align}
\mathcal{L}_{\mathrm{DPO}} &= -\mathbb{E}\log\sigma\!\left(s_\theta(y^+\mid x,I)-s_\theta(y^-\mid x,I)\right), \nonumber\\
\mathcal{L}_{\mathrm{img}} &= -\mathbb{E}\log\sigma\!\left(s_\theta(y^+\mid x,I)-s_\theta(y^+\mid x,\widetilde I)\right), \nonumber\\
\mathcal{L}_{\mathrm{anc}} &= -\mathbb{E}\log\sigma\!\left(s_\theta(y^+\mid x,I)-\delta\right).
\end{align}
Here, $\mathcal{L}_{\mathrm{DPO}}$ favors grounded responses over hallucinated ones, $\mathcal{L}_{\mathrm{img}}$ strengthens reliance on the original visual evidence, and $\mathcal{L}_{\mathrm{anc}}$ prevents the preference margin from increasing solely through suppression of the rejected response. The complete objective is
\begin{equation}
\mathcal{L}_{\mathrm{pref}}
=
\mathcal{L}_{\mathrm{DPO}}
+\lambda_{\mathrm{img}}\mathcal{L}_{\mathrm{img}}
+\lambda_{\mathrm{anc}}\mathcal{L}_{\mathrm{anc}}.
\label{eq:loss}
\end{equation}

\subsection{Expert-task null-space projection}
\label{sec:null-space}

The hallucination-oriented objective specifies the desired behavioral correction, but does not explicitly constrain its impact on the expert capabilities consolidated in the merged model. To address this, we propose to appropriately regulate the model update during training to reduce interference with the learned expertise. 
Specifically, MergeHEIR constructs layer-wise null spaces from expert-task activations and constrains post-merge model updates for hallucination correction to these spaces, to preserve the acquired expert capabilities.

\paragraph{Preservation condition.}
Consider a linear layer with weight matrix $W_m$ in the original merged checkpoint and a post-merge parameter update $\Delta W$. For an input activation $h$, the induced change in the layer output is
\begin{equation}
(W_m+\Delta W)h-W_mh
=
\Delta Wh.
\end{equation}
For each constituent expert task $t\in\mathcal{T}$, we form a small calibration set by randomly sampling task examples. We feed these examples through the merged model and collect the resulting input activations into a matrix $H_t$. We then concatenate the task-specific matrices along the sample dimension:
\begin{equation}
H=
\left[
H_1,H_2,\ldots,H_{|\mathcal{T}|}
\right].
\end{equation}
The joint feature matrix $H$ characterizes the activation subspace induced by the expert-task calibration data. An update satisfying
\begin{equation}
\Delta WH=0
\end{equation}
leaves the original layer outputs unchanged on all collected activations. This condition motivates restricting post-merge updates to directions orthogonal to the activation subspace represented by the joint matrix $H$ consisting of different task features.

\paragraph{Null-space construction.}
Directly extracting a left null-space basis from the activation matrix $H$ becomes costly when a large number of token activations are collected. Since $H$ and $HH^\top$ share the same left null space, we instead operate on the smaller activation Gram matrix and construct the projector via singular value decomposition (SVD):
\begin{equation}
G=HH^\top=U\Sigma U^\top,
\qquad
U_{\mathrm{null}}
=
\left[
u_j
\mid
\sigma_j<\alpha\sigma_{\max}
\right],
\qquad
P=U_{\mathrm{null}}U_{\mathrm{null}}^\top.
\label{eq:expert-geometry}
\end{equation}
Here, $\alpha$ defines an approximate null space by selecting singular vectors satisfying $\sigma_j/\sigma_{\max}<\alpha$. We consider $\alpha\in[5\times10^{-6},5\times10^{-2}]$ and use $\alpha=5\times10^{-4}$ by default to balance adaptation flexibility and expert-capability preservation. MergeHEIR then projects the post-merge update as $\Delta W\leftarrow\Delta WP$, suppressing components aligned with activation-sensitive directions and confining adaptation to directions weakly represented in the expert-task activation space.

\subsection{Geometric Optimality and Perturbation Control}
\label{sec:geometric-analysis}

The projection in MergeHEIR must preserve useful hallucination-correction updates while limiting changes to expert-task representations. We first establish the minimum-distortion property within a fixed subspace, then examine spectral perturbation control and threshold-dependent adaptation flexibility, and finally extend this control beyond calibration data. Let $P$ be any fixed orthogonal projection matrix, and let $\mathcal{S}_{P}=\{Z:Z=ZP\}$ denote its induced admissible displacement subspace.

\begin{proposition}[Minimum-distortion projection]
\label{prop:min-distortion}
For any accumulated displacement $\Delta W$,
\begin{equation}
\Delta WP=\operatorname*{arg\,min}_{Z\in\mathcal{S}_{P}}\|Z-\Delta W\|_{F}^{2}.
\end{equation}
Moreover, $\Delta W=\Delta WP+\Delta W(I-P)$ is an orthogonal decomposition, so $\|\Delta WP\|_{F}\leq\|\Delta W\|_{F}$.
\end{proposition}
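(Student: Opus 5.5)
The plan is to treat matrices as elements of the Hilbert space $\mathbb{R}^{d_{\mathrm{out}}\times d_{\mathrm{in}}}$ with Frobenius inner product $\langle A,B\rangle=\operatorname{tr}(A^\top B)$. We then show that right-multiplication by $P$ is the orthogonal projection onto $\mathcal{S}_P$ in this space. The argument needs only $P^\top=P$ and $P^2=P$.

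\textbf{Step 1.} We identify the subspace. The set $\mathcal{S}_P$ is linear, and $\Delta WP\in\mathcal{S}_P$ because $(\Delta WP)P=\Delta WP^2=\Delta WP$.

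\textbf{Step 2.} We show that the residual $\Delta W(I-P)$ is Frobenius-orthogonal to every $Z\in\mathcal{S}_P$. Using $Z=ZP$ and the symmetry of $P$,
\begin{equation*}
\langle Z,\Delta W(I-P)\rangle=\operatorname{tr}\!\big(P Z^\top \Delta W(I-P)\big)=\operatorname{tr}\!\big(Z^\top\Delta W(I-P)P\big)=0,
\end{equation*}
where the middle equality is the cyclicity of the trace and the last uses $(I-P)P=P-P^2=0$. Taking $Z=\Delta WP$ shows that $\Delta W=\Delta WP+\Delta W(I-P)$ is an orthogonal decomposition.

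\textbf{Step 3.} This gives both conclusions. For any $Z\in\mathcal{S}_P$, the matrix $Z-\Delta WP$ lies in $\mathcal{S}_P$, so by Step 2 the Pythagorean identity holds:
\begin{equation*}
\|Z-\Delta W\|_F^2=\|Z-\Delta WP\|_F^2+\|\Delta W(I-P)\|_F^2 .
\end{equation*}
This is minimized exactly when $Z=\Delta WP$, so the minimizer is unique and the $\arg\min$ is well defined. Applying the same identity to $\Delta W$ itself gives $\|\Delta W\|_F^2=\|\Delta WP\|_F^2+\|\Delta W(I-P)\|_F^2\ge\|\Delta WP\|_F^2$, which is the norm bound.

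There is no real obstacle. The only point needing care is the trace manipulation in Step 2, where the projector must be moved through the trace using $P^\top=P$. The result also applies to the specific $P=U_{\mathrm{null}}U_{\mathrm{null}}^\top$ of \eqref{eq:expert-geometry}, since it is symmetric and idempotent because $U_{\mathrm{null}}$ has orthonormal columns.
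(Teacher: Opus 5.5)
Your proof is correct and follows essentially the same route as the paper. Both use $Z=ZP$, $P^\top=P$, trace cyclicity and $(I-P)P=0$ to get Frobenius-orthogonality of $\Delta W(I-P)$ to $\mathcal{S}_P$, and then read off uniqueness of the minimizer and the norm bound from the Pythagorean identity $\|Z-\Delta W\|_F^2=\|Z-\Delta WP\|_F^2+\|\Delta W(I-P)\|_F^2$. The differences are cosmetic: you prove orthogonality against all of $\mathcal{S}_P$ and then specialize to $Z-\Delta WP$, whereas the paper checks it directly for $Z-\Delta WP$ and gets the norm bound by setting $Z=0$.
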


Proposition~\ref{prop:min-distortion} establishes that null-space projection enforces the subspace constraint with minimum Frobenius-norm distortion. MergeHEIR therefore retains all admissible components of the preference update, avoiding unnecessary suppression of the learned hallucination-correction update.

\begin{theorem}[Spectral perturbation control]
\label{thm:spectral-control}
The range of the projector in Eq.~\eqref{eq:expert-geometry} is maximum-dimensional among subspaces satisfying $v^{\top}Gv<\alpha\sigma_{\max}\|v\|_{2}^{2}$ for every nonzero $v$. For any displacement $\Delta W$,
\begin{equation}
\|\Delta WPH_t\|_{F}^{2}
\leq
\|\Delta WPH\|_{F}^{2}
\leq
\alpha\sigma_{\max}\|\Delta WP\|_{F}^{2},
\qquad
\forall t\in\mathcal{T}.
\end{equation}
\end{theorem}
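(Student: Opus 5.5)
The plan is to work entirely in the eigenbasis of the symmetric positive semidefinite Gram matrix $G=HH^\top=U\Sigma U^\top$. Its singular values coincide with its eigenvalues, so the columns $u_j$ are orthonormal eigenvectors with $Gu_j=\sigma_j u_j$ and $\sigma_j\ge 0$. Write $k=\dim\operatorname{range}(P)$ for the number of indices with $\sigma_j<\alpha\sigma_{\max}$. Set $\mathcal{V}=\operatorname{span}\{u_j:\sigma_j<\alpha\sigma_{\max}\}$, which is the range of $P$ in Eq.~\eqref{eq:expert-geometry}.

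\emph{Maximality.} First I show that $\mathcal{V}$ itself satisfies the Rayleigh condition. Any nonzero $v\in\mathcal{V}$ can be written $v=\sum_j c_j u_j$ over the selected indices, so
\[
v^\top G v=\sum_j\sigma_j c_j^2<\alpha\sigma_{\max}\sum_j c_j^2=\alpha\sigma_{\max}\|v\|_2^2 .
\]
For maximality, suppose $\mathcal{W}$ satisfies the condition with $\dim\mathcal{W}>k$. Let $\mathcal{V}^\perp=\operatorname{span}\{u_j:\sigma_j\ge\alpha\sigma_{\max}\}$, which has dimension $d-k$. By dimension counting, $\dim\mathcal{W}+\dim\mathcal{V}^\perp>d$, so the two subspaces meet in some nonzero vector $v$. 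For that $v$ we have $v^\top Gv\ge\alpha\sigma_{\max}\|v\|_2^2$, which is a contradiction. This is the Courant--Fischer argument, and it is the only step needing any care. The subtle point is handling ties exactly at the threshold. The strict inequality in both the selection rule and the condition makes this work cleanly, because eigenvectors with $\sigma_j=\alpha\sigma_{\max}$ are excluded on both sides.

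\emph{Perturbation bound.} For the right inequality, write $\Delta WP$ row by row, with rows $r_i^\top$. Each $r_i^\top = r_i^\top P$, so every $r_i\in\mathcal{V}$. Then
\[
\|\Delta WPH\|_F^2=\operatorname{tr}(\Delta WP\,G\,P\Delta W^\top)=\sum_i r_i^\top G r_i .
\]
Applying the Rayleigh bound to each row gives $r_i^\top G r_i\le\alpha\sigma_{\max}\|r_i\|_2^2$. Equality holds only when $r_i=0$, which is why the bound is stated with $\le$. Summing over $i$ gives $\alpha\sigma_{\max}\|\Delta WP\|_F^2$. Equivalently, one can use $PGP=U_{\mathrm{null}}\Sigma_{\mathrm{null}}U_{\mathrm{null}}^\top\preceq\alpha\sigma_{\max}P$.

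For the left inequality, $H=[H_1,\dots,H_{|\mathcal{T}|}]$ is a column concatenation. Hence $\|\Delta WPH\|_F^2=\sum_{s}\|\Delta WPH_s\|_F^2$, and every summand is nonnegative. Each task-specific term $\|\Delta WPH_t\|_F^2$ is therefore at most the total.

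Everything here is routine linear algebra. I expect the main obstacle, such as it is, to be stating the maximality claim precisely. I will interpret ``subspaces satisfying the condition'' as subspaces on which the Rayleigh quotient of $G$ is strictly below $\alpha\sigma_{\max}$. The intersection argument above then settles it without any further assumptions on $H$.
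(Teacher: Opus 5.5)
Your proof is correct and follows the paper's argument essentially step for step. For maximality, you expand in the eigenbasis and use dimension counting against $\operatorname{span}\{u_i:\sigma_i\geq\alpha\sigma_{\max}\}$; for the pooled bound, you use the trace identity with $PGP\preceq\alpha\sigma_{\max}P$. Your row-wise Rayleigh-quotient phrasing and the column additivity $\|\Delta WPH\|_F^2=\sum_s\|\Delta WPH_s\|_F^2$ are equivalent restatements of the paper's Loewner-order argument through $G=\sum_t G_t$ and $PG_tP\preceq PGP$.
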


Theorem~\ref{thm:spectral-control} implies that choosing an appropriately small $\alpha$ allows MergeHEIR to limit layer-output perturbations on expert-task calibration activations. This reduces interference with inherited task representations during hallucination correction. Meanwhile, the maximum-dimensional construction simultaneously retains as many admissible update directions as the sensitivity constraint permits.

\begin{table*}[t]
\caption{Expert-task results on Qwen2.5-VL models.}
\vspace{-1pt}
\label{tab:qwen-expert}
\centering
\scriptsize
\setlength{\tabcolsep}{1.3pt}

\begin{subtable}{\textwidth}
\centering
\begin{tabular}{@{}C{70pt}C{38pt}|C{27pt}|cc|C{27pt}C{27pt}|C{27pt}C{27pt}|C{27pt}@{}}
\toprule
\multirow[c]{2}{*}{Methods} & \multirow[c]{2}{*}{Stage} & \multicolumn{1}{c|}{Chart} & \multicolumn{2}{c|}{Geometry} & \multicolumn{2}{c|}{OCR} & \multicolumn{2}{c|}{VQA} & \multirow[c]{2}{*}{Avg.}\\
 & & ChartQA & MATH-Vision & Geometry3K & TextVQA & DocVQA & GQA & OKVQA\\
\midrule
Task Arithmetic & Merged & 83.96 & 20.72 & 48.25 & 78.67 & 92.66 & 60.98 & 40.48 & 60.82\\
\rowcolor{mergeheirrow} & MergeHEIR & 84.20 & 21.38 & 48.75 & 78.72 & 92.66 & 60.90 & 40.93 & 61.08\\
TIES Merging & Merged & 83.92 & 21.71 & 48.75 & 78.53 & 92.54 & 61.15 & 40.18 & 60.97\\
\rowcolor{mergeheirrow} & MergeHEIR & 83.96 & 20.72 & 47.92 & 78.67 & 92.67 & 60.96 & 40.09 & 60.71\\
DELLA Merging & Merged & 83.68 & 21.05 & 48.75 & 78.55 & 92.57 & 60.86 & 40.35 & 60.83\\
\rowcolor{mergeheirrow} & MergeHEIR & 84.00 & 21.05 & 48.42 & 78.75 & 92.69 & 60.70 & 40.84 & 60.92\\
Breadcrumbs Merging & Merged & 83.96 & 20.39 & 48.42 & 78.87 & 92.59 & 61.18 & 40.34 & 60.82\\
\rowcolor{mergeheirrow} & MergeHEIR & 84.08 & 21.05 & 48.09 & 78.80 & 92.71 & 61.00 & 40.63 & 60.91\\
Model Stock & Merged & 83.56 & 20.72 & 48.09 & 78.15 & 92.44 & 60.35 & 41.13 & 60.63\\
\rowcolor{mergeheirrow} & MergeHEIR & 83.48 & 21.05 & 47.75 & 78.20 & 92.39 & 59.87 & 41.70 & 60.64\\
Iso-C & Merged & 83.44 & 21.05 & 48.42 & 78.14 & 92.48 & 60.35 & 41.04 & 60.70\\
\rowcolor{mergeheirrow} & MergeHEIR & 83.76 & 21.38 & 47.25 & 78.38 & 92.46 & 60.37 & 41.36 & 60.71\\
WUDI Merging & Merged & 84.08 & 21.05 & 48.59 & 79.24 & 92.75 & 62.28 & 39.41 & 61.06\\
\rowcolor{mergeheirrow} & MergeHEIR & 84.36 & 20.07 & 48.25 & 79.56 & 93.23 & 62.23 & 39.58 & 61.04\\
OptMerge & Merged & 84.08 & 21.71 & 48.09 & 78.85 & 92.55 & 61.70 & 40.19 & 61.02\\
\rowcolor{mergeheirrow} & MergeHEIR & 84.48 & 21.71 & 47.59 & 79.05 & 92.62 & 61.28 & 40.69 & 61.06\\
\bottomrule
\end{tabular}
\caption{Qwen2.5-VL-3B}
\end{subtable}

\vspace{5pt}

\begin{subtable}{\textwidth}
\centering
\begin{tabular}{@{}C{70pt}C{38pt}|C{27pt}|cc|C{27pt}C{27pt}|C{27pt}C{27pt}|C{27pt}@{}}
\toprule
\multirow[c]{2}{*}{Methods} & \multirow[c]{2}{*}{Stage} & \multicolumn{1}{c|}{Chart} & \multicolumn{2}{c|}{Geometry} & \multicolumn{2}{c|}{OCR} & \multicolumn{2}{c|}{VQA} & \multirow[c]{2}{*}{Avg.}\\
 & & ChartQA & MATH-Vision & Geometry3K & TextVQA & DocVQA & GQA & OKVQA\\
\midrule
Task Arithmetic & Merged & 86.32 & 26.32 & 54.41 & 83.26 & 94.62 & 60.94 & 44.42 & 64.32\\
\rowcolor{mergeheirrow} & MergeHEIR & 85.76 & 26.32 & 53.58 & 82.71 & 94.76 & 60.90 & 44.21 & 64.03\\
TIES Merging & Merged & 84.12 & 25.00 & 53.41 & 82.87 & 94.73 & 60.60 & 43.79 & 63.50\\
\rowcolor{mergeheirrow} & MergeHEIR & 82.52 & 23.36 & 52.91 & 82.21 & 94.66 & 60.59 & 43.04 & 62.76\\
DELLA Merging & Merged & 87.24 & 25.99 & 54.91 & 83.46 & 94.50 & 61.52 & 45.44 & 64.72\\
\rowcolor{mergeheirrow} & MergeHEIR & 87.16 & 24.34 & 55.41 & 83.84 & 94.82 & 61.54 & 44.81 & 64.56\\
Breadcrumbs Merging & Merged & 87.24 & 26.32 & 54.58 & 83.45 & 94.44 & 61.40 & 45.40 & 64.69\\
\rowcolor{mergeheirrow} & MergeHEIR & 87.16 & 24.67 & 55.07 & 83.96 & 94.80 & 61.62 & 44.64 & 64.56\\
Model Stock & Merged & 83.72 & 23.03 & 52.41 & 82.84 & 94.71 & 60.54 & 43.45 & 62.96\\
\rowcolor{mergeheirrow} & MergeHEIR & 81.56 & 23.03 & 52.25 & 81.95 & 94.70 & 60.41 & 43.30 & 62.46\\
Iso-C & Merged & 83.40 & 23.03 & 50.75 & 82.87 & 94.68 & 60.52 & 43.52 & 62.68\\
\rowcolor{mergeheirrow} & MergeHEIR & 82.00 & 22.70 & 52.25 & 82.03 & 94.72 & 60.52 & 43.04 & 62.47\\
WUDI Merging & Merged & 87.52 & 27.30 & 54.08 & 83.12 & 94.41 & 63.40 & 48.74 & 65.51\\
\rowcolor{mergeheirrow} & MergeHEIR & 86.52 & 26.97 & 55.41 & 82.33 & 94.51 & 63.48 & 48.39 & 65.37\\
OptMerge & Merged & 87.44 & 27.30 & 54.24 & 83.18 & 94.41 & 62.55 & 47.99 & 65.30\\
\rowcolor{mergeheirrow} & MergeHEIR & 87.00 & 27.63 & 53.74 & 82.86 & 94.74 & 62.73 & 47.75 & 65.21\\
\bottomrule
\end{tabular}
\caption{Qwen2.5-VL-7B}
\end{subtable}

\vspace{-2.4em}
\end{table*}

\begin{table*}[t]
\caption{Hallucination-task results on Qwen2.5-VL models.}
\vspace{-1pt}
\label{tab:qwen-hallucination}
\centering
\scriptsize
\setlength{\tabcolsep}{1.9pt}

\begin{subtable}{\textwidth}
\centering
\begin{tabular}{@{}C{70pt}C{38pt}|ccc|cc|C{26pt}C{26pt}|c@{}}
\toprule
\multirow[c]{2}{*}{Methods} & \multirow[c]{2}{*}{Stage} & \multicolumn{3}{c|}{AMBER} & \multicolumn{2}{c|}{Object Hal} & \multicolumn{2}{c|}{POPE Adversarial} & \multicolumn{1}{c}{GPT-4 Eval}\\
 & & CHAIR$\downarrow$ & HalRate$\downarrow$ & Cog$\downarrow$ & CHAIR$_s\downarrow$ & CHAIR$_i\downarrow$ & Acc.$\uparrow$ & F1$\uparrow$ & SHR$\downarrow$\\
\midrule
Task Arithmetic & Merged & 8.5 & 59.2 & 7.0 & 31.7 & 8.5 & 82.2 & 83.5 & 28.3\\
\rowcolor{mergeheirrow} & MergeHEIR & 4.7 & 20.9 & 1.8 & 23.3 & 6.1 & 85.8 & 86.0 & 25.4\\
TIES Merging & Merged & 8.5 & 57.3 & 6.4 & 34.0 & 9.0 & 82.2 & 83.5 & 28.3\\
\rowcolor{mergeheirrow} & MergeHEIR & 5.1 & 23.0 & 2.0 & 21.7 & 5.9 & 85.8 & 86.1 & 26.5\\
DELLA Merging & Merged & 8.8 & 59.6 & 7.1 & 36.0 & 9.0 & 82.1 & 83.4 & 29.9\\
\rowcolor{mergeheirrow} & MergeHEIR & 4.8 & 21.5 & 1.9 & 21.7 & 6.4 & 85.8 & 86.0 & 27.3\\
Breadcrumbs Merging & Merged & 8.7 & 59.5 & 7.0 & 34.3 & 9.2 & 81.9 & 83.2 & 28.5\\
\rowcolor{mergeheirrow} & MergeHEIR & 5.0 & 22.4 & 1.9 & 22.0 & 6.7 & 85.7 & 86.0 & 25.2\\
Model Stock & Merged & 8.4 & 54.0 & 6.1 & 34.3 & 8.6 & 82.9 & 83.9 & 27.8\\
\rowcolor{mergeheirrow} & MergeHEIR & 4.9 & 20.4 & 1.8 & 22.7 & 6.0 & 86.6 & 86.5 & 25.6\\
Iso-C & Merged & 8.4 & 54.3 & 6.1 & 36.3 & 8.6 & 82.8 & 83.9 & 28.4\\
\rowcolor{mergeheirrow} & MergeHEIR & 5.4 & 22.6 & 2.1 & 21.3 & 5.9 & 85.7 & 86.0 & 26.0\\
WUDI Merging & Merged & 8.0 & 41.0 & 4.5 & 40.3 & 9.9 & 85.4 & 85.2 & 31.0\\
\rowcolor{mergeheirrow} & MergeHEIR & 4.9 & 21.2 & 2.0 & 23.7 & 5.6 & 89.1 & 87.9 & 27.2\\
OptMerge & Merged & 8.3 & 49.9 & 5.7 & 37.3 & 10.6 & 83.5 & 84.2 & 28.5\\
\rowcolor{mergeheirrow} & MergeHEIR & 5.2 & 25.2 & 2.5 & 22.0 & 6.9 & 87.3 & 87.0 & 25.1\\
\bottomrule
\end{tabular}
\caption{Qwen2.5-VL-3B}
\end{subtable}

\vspace{5pt}

\begin{subtable}{\textwidth}
\centering
\begin{tabular}{@{}C{70pt}C{38pt}|ccc|cc|C{26pt}C{26pt}|c@{}}
\toprule
\multirow[c]{2}{*}{Methods} & \multirow[c]{2}{*}{Stage} & \multicolumn{3}{c|}{AMBER} & \multicolumn{2}{c|}{Object Hal} & \multicolumn{2}{c|}{POPE Adversarial} & \multicolumn{1}{c}{GPT-4 Eval}\\
 & & CHAIR$\downarrow$ & HalRate$\downarrow$ & Cog$\downarrow$ & CHAIR$_s\downarrow$ & CHAIR$_i\downarrow$ & Acc.$\uparrow$ & F1$\uparrow$ & SHR$\downarrow$\\
\midrule
Task Arithmetic & Merged & 4.9 & 26.6 & 1.6 & 39.3 & 8.4 & 83.4 & 80.7 & 28.8\\
\rowcolor{mergeheirrow} & MergeHEIR & 4.0 & 21.5 & 1.1 & 23.3 & 6.2 & 87.0 & 86.0 & 25.9\\
TIES Merging & Merged & 4.8 & 23.2 & 1.1 & 39.3 & 8.3 & 85.5 & 84.4 & 29.7\\
\rowcolor{mergeheirrow} & MergeHEIR & 4.1 & 20.3 & 0.9 & 24.0 & 5.9 & 89.2 & 88.0 & 27.5\\
DELLA Merging & Merged & 4.9 & 26.0 & 1.5 & 42.3 & 9.2 & 84.0 & 81.8 & 28.5\\
\rowcolor{mergeheirrow} & MergeHEIR & 4.3 & 21.4 & 1.0 & 24.7 & 6.7 & 87.0 & 86.5 & 26.1\\
Breadcrumbs Merging & Merged & 4.7 & 25.2 & 1.6 & 41.0 & 8.8 & 83.9 & 81.7 & 29.4\\
\rowcolor{mergeheirrow} & MergeHEIR & 4.2 & 21.6 & 1.2 & 24.3 & 6.4 & 87.5 & 86.9 & 25.8\\
Model Stock & Merged & 5.0 & 27.0 & 1.6 & 42.7 & 8.9 & 83.2 & 80.4 & 28.3\\
\rowcolor{mergeheirrow} & MergeHEIR & 4.2 & 22.3 & 1.2 & 25.0 & 7.1 & 86.8 & 85.7 & 26.0\\
Iso-C & Merged & 5.2 & 27.3 & 1.7 & 39.7 & 8.8 & 83.4 & 80.5 & 28.7\\
\rowcolor{mergeheirrow} & MergeHEIR & 3.7 & 20.8 & 1.0 & 25.3 & 6.9 & 86.7 & 85.5 & 26.2\\
WUDI Merging & Merged & 4.9 & 23.7 & 1.3 & 37.3 & 8.7 & 86.0 & 85.1 & 29.8\\
\rowcolor{mergeheirrow} & MergeHEIR & 3.9 & 20.4 & 1.1 & 22.7 & 6.6 & 89.4 & 88.9 & 26.9\\
OptMerge & Merged & 4.9 & 24.3 & 1.4 & 41.0 & 9.0 & 86.1 & 84.9 & 30.1\\
\rowcolor{mergeheirrow} & MergeHEIR & 4.2 & 20.2 & 1.2 & 25.0 & 6.2 & 89.7 & 89.3 & 26.8\\
\bottomrule
\end{tabular}
\caption{Qwen2.5-VL-7B}
\end{subtable}

\vspace{-2.0em}
\end{table*}

\begin{proposition}[Threshold-controlled adaptation geometry]
\label{prop:threshold-geometry}
For $0\leq\alpha_1\leq\alpha_2$, let $P_1=P_{\alpha_1}$, $P_2=P_{\alpha_2}$, and $Q=P_2-P_1$. Then $\operatorname{range}(P_1)\subseteq\operatorname{range}(P_2)$ and $\operatorname{rank}(P_1)\leq\operatorname{rank}(P_2)$. Moreover, for any accumulated displacement $\Delta W$,
\begin{equation}
\|\Delta WP_2\|_F^2=\|\Delta WP_1\|_F^2+\|\Delta WQ\|_F^2,\qquad \|\Delta WP_2H\|_F^2=\|\Delta WP_1H\|_F^2+\|\Delta WQH\|_F^2.
\end{equation}
\end{proposition}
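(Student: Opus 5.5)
The plan is to work entirely in the eigenbasis of the symmetric positive semidefinite Gram matrix $G=HH^\top=U\Sigma U^\top$ from Eq.~\eqref{eq:expert-geometry}. Write $G=\sum_j \sigma_j u_ju_j^\top$ with $\{u_j\}$ orthonormal, and fix this single orthonormal basis for both thresholds. Then $P_\alpha=\sum_{j:\sigma_j<\alpha\sigma_{\max}}u_ju_j^\top$. Since $\alpha_1\le\alpha_2$, the index set $J_1=\{j:\sigma_j<\alpha_1\sigma_{\max}\}$ is contained in $J_2=\{j:\sigma_j<\alpha_2\sigma_{\max}\}$. This immediately gives $\operatorname{range}(P_1)=\operatorname{span}\{u_j\}_{j\in J_1}\subseteq\operatorname{span}\{u_j\}_{j\in J_2}=\operatorname{range}(P_2)$ and $\operatorname{rank}(P_1)=|J_1|\le|J_2|=\operatorname{rank}(P_2)$.

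The second step identifies $Q=P_2-P_1=\sum_{j\in J_2\setminus J_1}u_ju_j^\top$ as itself an orthogonal projector. The key relations are $P_1Q=QP_1=0$, $P_1P_2=P_1$, and $P_2=P_1+Q$, all of which follow from orthonormality of the $u_j$. For the first identity I would expand
\[
\|\Delta WP_2\|_F^2=\operatorname{tr}(\Delta WP_2\Delta W^\top)=\operatorname{tr}(\Delta WP_1\Delta W^\top)+\operatorname{tr}(\Delta WQ\Delta W^\top),
\]
using $P_2^2=P_2$. Equivalently, the cross term $\langle \Delta WP_1,\Delta WQ\rangle_F=\operatorname{tr}(\Delta WP_1Q\Delta W^\top)$ vanishes.

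For the second identity the cross term is
\[
\langle \Delta WP_1H,\Delta WQH\rangle_F=\operatorname{tr}(\Delta WP_1\,HH^\top\,Q\,\Delta W^\top)=\operatorname{tr}(\Delta WP_1GQ\Delta W^\top).
\]
The main point to verify is that $P_1GQ=0$. This holds because $G$ is diagonal in the same basis: $GQ=\sum_{j\in J_2\setminus J_1}\sigma_ju_ju_j^\top$, whose range lies in $\operatorname{span}\{u_j\}_{j\in J_2\setminus J_1}$, and $P_1$ annihilates that span.

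The only subtle point, and the one I would flag as the obstacle, is eigenvalue multiplicity. When $\sigma_j$ has repeated values the eigenvectors are not unique, so I must insist that $P_{\alpha_1}$ and $P_{\alpha_2}$ are built from one common eigendecomposition. A cleaner route avoids the issue altogether: since $P_\alpha$ is the spectral projector $\mathbf 1_{[0,\alpha\sigma_{\max})}(G)$, which is basis-independent, the projectors $P_1$ and $Q=\mathbf 1_{[\alpha_1\sigma_{\max},\alpha_2\sigma_{\max})}(G)$ are functions of $G$. They therefore commute with $G$ and satisfy $P_1Q=0$, which gives $P_1GQ=GP_1Q=0$ directly. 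The degenerate case $\sigma_{\max}=0$, where $G=0$ and all projectors equal $I$ or $0$ consistently, is trivial.
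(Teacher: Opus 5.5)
Your proposal is correct and follows essentially the same route as the paper: nested index sets $\mathcal{I}_{\alpha_1}\subseteq\mathcal{I}_{\alpha_2}$ in a fixed eigenbasis of $G$, $Q$ as the projector onto the eigenvectors indexed by the difference, and the cross terms removed by $P_1Q=0$ and $P_1GQ=0$. The multiplicity worry is harmless, because thresholding on eigenvalue values always selects whole eigenspaces, so $P_\alpha$ is already the basis-independent spectral projector you describe. Likewise, when $\sigma_{\max}=0$ the strict inequality makes every $P_\alpha$ equal to $0$.
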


Proposition~\ref{prop:threshold-geometry} shows that tuning $\alpha$ allows MergeHEIR to balance preference-update retention against calibration perturbation. An appropriate threshold thus avoids overly restricting hallucination-correction updates while limiting perturbations to expert-task representations. We evaluate this adaptation--retention balance in Figure~\ref{fig:threshold-sensitivity}.

\begin{corollary}[Population extension]
\label{cor:population-extension}
Let $N$ be the number of columns in $H$, let $\widehat{G}=\frac{1}{N}HH^{\top}$ with $\widehat{\sigma}_{\max}=\lambda_{\max}(\widehat{G})$, and let $\Gamma=\mathbb{E}[hh^{\top}]$. If $\|\widehat{G}-\Gamma\|_{2}\leq\varepsilon$, then
\begin{equation}
\mathbb{E}\!\left[\|\Delta WPh\|_{2}^{2}\right]
\leq
\left(\alpha\widehat{\sigma}_{\max}+\varepsilon\right)\|\Delta WP\|_{F}^{2}.
\end{equation}
\end{corollary}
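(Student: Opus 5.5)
Write $M=\Delta WP$. Then $\|Mh\|_2^2=h^\top M^\top M h=\operatorname{tr}(M^\top M\,hh^\top)$. Taking expectations and using linearity of trace and expectation gives
\begin{equation}
\mathbb{E}\!\left[\|\Delta WPh\|_2^2\right]=\operatorname{tr}\!\left(M^\top M\,\Gamma\right)
=\operatorname{tr}\!\left(M^\top M\,\widehat G\right)+\operatorname{tr}\!\left(M^\top M(\Gamma-\widehat G)\right).
\end{equation}
The plan is to bound the two terms separately: the first by the calibration-level control of Theorem~\ref{thm:spectral-control}, rescaled to the normalized Gram matrix, and the second by the operator-norm hypothesis $\|\widehat G-\Gamma\|_2\le\varepsilon$.

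\textbf{First term.} Note that $\operatorname{tr}(M^\top M\widehat G)=\frac1N\|MH\|_F^2=\frac1N\|\Delta WPH\|_F^2$. The main point to handle carefully is normalization. The projector $P$ is built from $G=HH^\top$ with threshold $\alpha\sigma_{\max}$. Since $\widehat G=G/N$, it has the same eigenvectors as $G$, and all eigenvalues scale by $1/N$. Hence the selection rule $\sigma_j<\alpha\sigma_{\max}$ is scale-invariant, and $P$ is equally the projector onto eigenvectors of $\widehat G$ with eigenvalues below $\alpha\widehat\sigma_{\max}$, where $\widehat\sigma_{\max}=\sigma_{\max}/N$.

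Because $P$ is a spectral projector of $\widehat G$, it commutes with $\widehat G$. Therefore $P\widehat GP\preceq\alpha\widehat\sigma_{\max}P$. Row by row this gives, for each row $w^\top$ of $\Delta W$,
\begin{equation}
w^\top P\widehat GPw\le\alpha\widehat\sigma_{\max}\|Pw\|_2^2.
\end{equation}
Summing over rows yields $\operatorname{tr}(M^\top M\widehat G)\le\alpha\widehat\sigma_{\max}\|\Delta WP\|_F^2$. This is exactly Theorem~\ref{thm:spectral-control} divided by $N$, so I will cite it directly.

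\textbf{Second term.} Here I use the standard inequality $|\operatorname{tr}(AB)|\le\|B\|_2\operatorname{tr}(A)$ for $A\succeq0$ and symmetric $B$. To see it, write $A=\sum_i a_ia_i^\top$; then $\operatorname{tr}(AB)=\sum_i a_i^\top Ba_i$, and each term is at most $\|B\|_2\|a_i\|_2^2$.

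Apply this with $A=M^\top M$ and $B=\Gamma-\widehat G$. Since $\operatorname{tr}(M^\top M)=\|\Delta WP\|_F^2$, the second term is at most $\varepsilon\|\Delta WP\|_F^2$.

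\textbf{Conclusion.} Adding the two bounds gives the claim. The only delicate step is the normalization argument above: checking that $P$, defined via the unnormalized $G$, satisfies the same threshold relative to $\widehat G$ and $\widehat\sigma_{\max}$. The remaining steps are routine trace manipulations.
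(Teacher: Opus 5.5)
Your proof is correct and essentially matches the paper's. The paper combines $P\Gamma P\preceq P\widehat G P+\varepsilon P$ with $P\widehat G P\preceq\alpha\widehat\sigma_{\max}P$ (using the same scale-invariance of the threshold under $G\mapsto G/N$) and then takes a single trace against $\Delta W(\cdot)\Delta W^{\top}$. You instead split the trace first and bound the two pieces separately, which is the same argument, since $\operatorname{tr}(AB)\le\|B\|_2\operatorname{tr}(A)$ is just the traced form of $B\preceq\|B\|_2 I$ under congruence.
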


Corollary~\ref{cor:population-extension} indicates that a projector constructed from representative expert-task calibration data controls expected perturbations on unseen activations from the same distribution. MergeHEIR thus limits adaptation-induced changes to expert-task representations beyond the calibration set.

Together, these results provide a theoretical account of how null-space projection in MergeHEIR controls perturbations to expert-task representations during hallucination-oriented adaptation. The projection minimizes update distortion within the selected subspace, which is maximum-dimensional under the prescribed sensitivity constraint. The threshold analysis characterizes the balance between adaptation flexibility and representation stability, while the population extension carries perturbation control beyond calibration data under the stated spectral approximation condition. Complete proofs and fixed-prefix output analysis are provided in Appendix~\ref{app:theory}.

%% file: secs/5_experiments.tex
\section{Experiments}
\label{sec:experiments}

\subsection{Experimental setup}
\label{sec:experimental-setup}

\paragraph{Models, merging methods, and data.}
We conduct experiments on three MLLM configurations: Qwen2.5-VL-3B, Qwen2.5-VL-7B \citep{bai2025qwen2}, and InternVL3.5-8B \citep{wang2025internvl3}. For each configuration, we merge four task-specialized experts covering chart understanding, geometry reasoning, OCR, and visual question answering using eight representative methods: Task Arithmetic, TIES, DELLA, Breadcrumbs, Model Stock, Iso-C, WUDI, and OptMerge. Post-merge adaptation uses 10,000 multimodal preference pairs sampled from RLAIF-V \citep{yu2025rlaif}, while layer-wise null-space projectors are constructed from 500 calibration examples per expert task.

\paragraph{Evaluation.}
We evaluate both the retention of inherited expertise and the mitigation of hallucination. Expert-task performance is measured on ChartQA \citep{masry2022chartqa} for chart understanding; MATH-Vision \citep{wang2024measuring} and Geometry3K \citep{lu2021inter} for geometry reasoning; TextVQA \citep{singh2019towards} and DocVQA \citep{mathew2021docvqa} for OCR; and GQA \citep{hudson2019gqa} and OKVQA \citep{marino2019ok} for visual question answering. We report the unweighted mean over these seven benchmarks as \textit{Expert Avg.} Hallucination behavior is evaluated using AMBER \citep{wang2023amber} for multidimensional generative hallucination, Object HalBench \citep{rohrbach2018object} for object hallucination in open-ended image descriptions, POPE \citep{li2023evaluating} for object-existence judgments, and GPT-4-assisted evaluation \citep{zhao2023beyond} for fine-grained hallucinations in generated image descriptions. The latter reports the Sentence-level Hallucination Ratio (SHR), where lower values indicate fewer hallucinated sentences. To reduce sampling-induced variance and ensure consistent checkpoint-level comparisons across all benchmarks, we generate every model response with greedy decoding throughout evaluation.

\paragraph{Implementation details.}
We perform LoRA-based post-merge adaptation for two epochs with a global batch size of 32 and a DPO scaling coefficient of $\beta=0.1$. LoRA is applied only to non-visual modules, while all visual-side parameters remain frozen. For Qwen2.5-VL-3B, we use $(r,\alpha_{\mathrm{LoRA}})=(32,64)$ and a learning rate of $2\times10^{-4}$; for Qwen2.5-VL-7B and InternVL3.5-8B, we use $(r,\alpha_{\mathrm{LoRA}})=(16,32)$ and a learning rate of $1\times10^{-4}$. We set $\lambda_{\mathrm{img}}=0.2$, $\lambda_{\mathrm{anc}}=1.0$, and $\delta=0$, with a random image-masking ratio of $0.3$. The accumulated parameter displacement is projected every five optimizer steps using a relative spectral threshold of $\alpha=5\times10^{-4}$. Additional experimental details are provided in Appendix~\ref{app:experimental_details}.

\begin{figure*}[!t]
\centering
\includegraphics[width=0.98\textwidth]{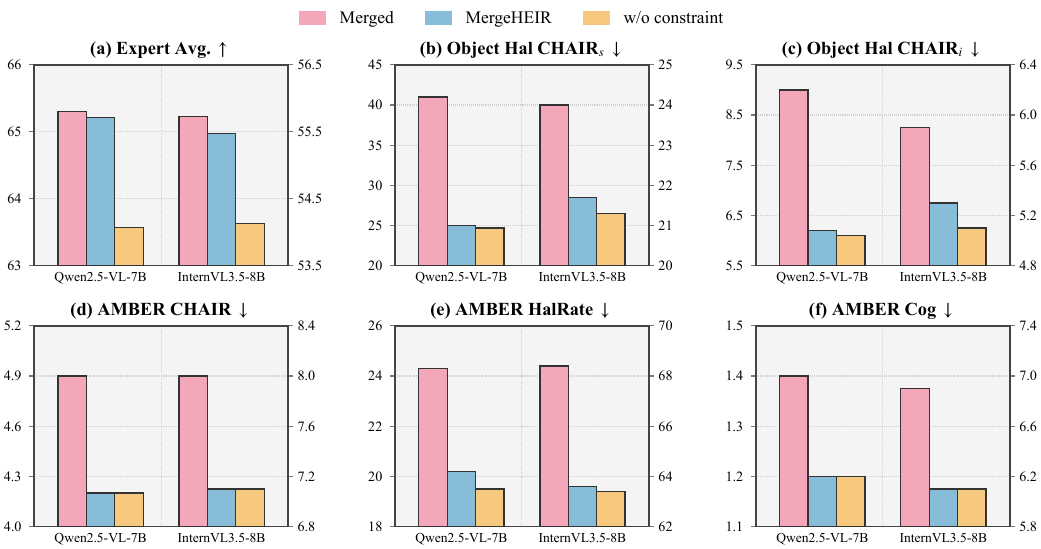}
\caption{Constraint ablation on Qwen2.5-VL-7B and InternVL3.5-8B. MergeHEIR better preserves inherited expertise while achieving hallucination mitigation comparable to unconstrained adaptation.}
\label{fig:constraint-ablation}
\end{figure*}
\vspace{-3pt}

\subsection{Main results}
\label{sec:main-results}

Tables~\ref{tab:qwen-expert} and~\ref{tab:qwen-hallucination} report the complete Qwen2.5-VL results across all eight merging methods. Each method is evaluated before and after post-merge adaptation, enabling direct paired comparison. Appendix~\ref{app:additional-results} reports complete InternVL3.5-8B evaluations, POPE results across all three splits, and compatibility experiments with RLAIF-V \citep{yu2025rlaif} and OPA-DPO \citep{yang2025mitigating}. Together, these results examine whether the retention benefit of null-space projection generalizes across model families, evaluation settings, and alternative preference objectives.

\paragraph{Consistent hallucination mitigation.}
MergeHEIR reduces AMBER HalRate in all 24 matched comparisons across three model configurations and eight merging methods. The improvement is most pronounced on Qwen2.5-VL-3B, for which HalRate decreases from $41.0$--$59.6$ before adaptation to $20.4$--$25.2$ afterward. Consistent reductions are also observed on Qwen2.5-VL-7B and InternVL3.5-8B. Object HalBench further exhibits substantial reductions in both CHAIR$_s$ and CHAIR$_i$ across most configurations. POPE accuracy and F1 improve across all evaluated splits and all 24 comparisons. Finally, the GPT-4-assisted evaluation provides complementary judge-based evidence: SHR decreases in all 24 matched comparisons, with mean reductions of $2.80$, $2.76$, and $3.26$ points on Qwen2.5-VL-3B, Qwen2.5-VL-7B, and InternVL3.5-8B, respectively. Together, these results demonstrate that MergeHEIR consistently mitigates hallucination across complementary generative, discriminative, and judge-based evaluations.

\paragraph{Preservation of inherited expertise.}
The broad hallucination improvements are achieved with limited changes to the expert capabilities inherited by the merged checkpoints. On Qwen2.5-VL-3B, MergeHEIR achieves substantial hallucination mitigation while leaving Expert Avg essentially unchanged. Qwen2.5-VL-7B and InternVL3.5-8B exhibit the same favorable pattern, with broad hallucination improvements accompanied by mean Expert Avg decreases of only $0.28$ and $0.31$ points, respectively. These results demonstrate that MergeHEIR delivers broad hallucination mitigation across model families without substantially compromising the capabilities consolidated during merging. The matched ablation in Figure~\ref{fig:constraint-ablation} tests whether null-space projection improves expert retention without materially weakening hallucination mitigation on Qwen2.5-VL-7B and InternVL3.5-8B.

\subsection{Ablation studies}
\label{sec:ablation}

\begin{figure*}[!t]
\centering
\includegraphics[width=0.98\textwidth]{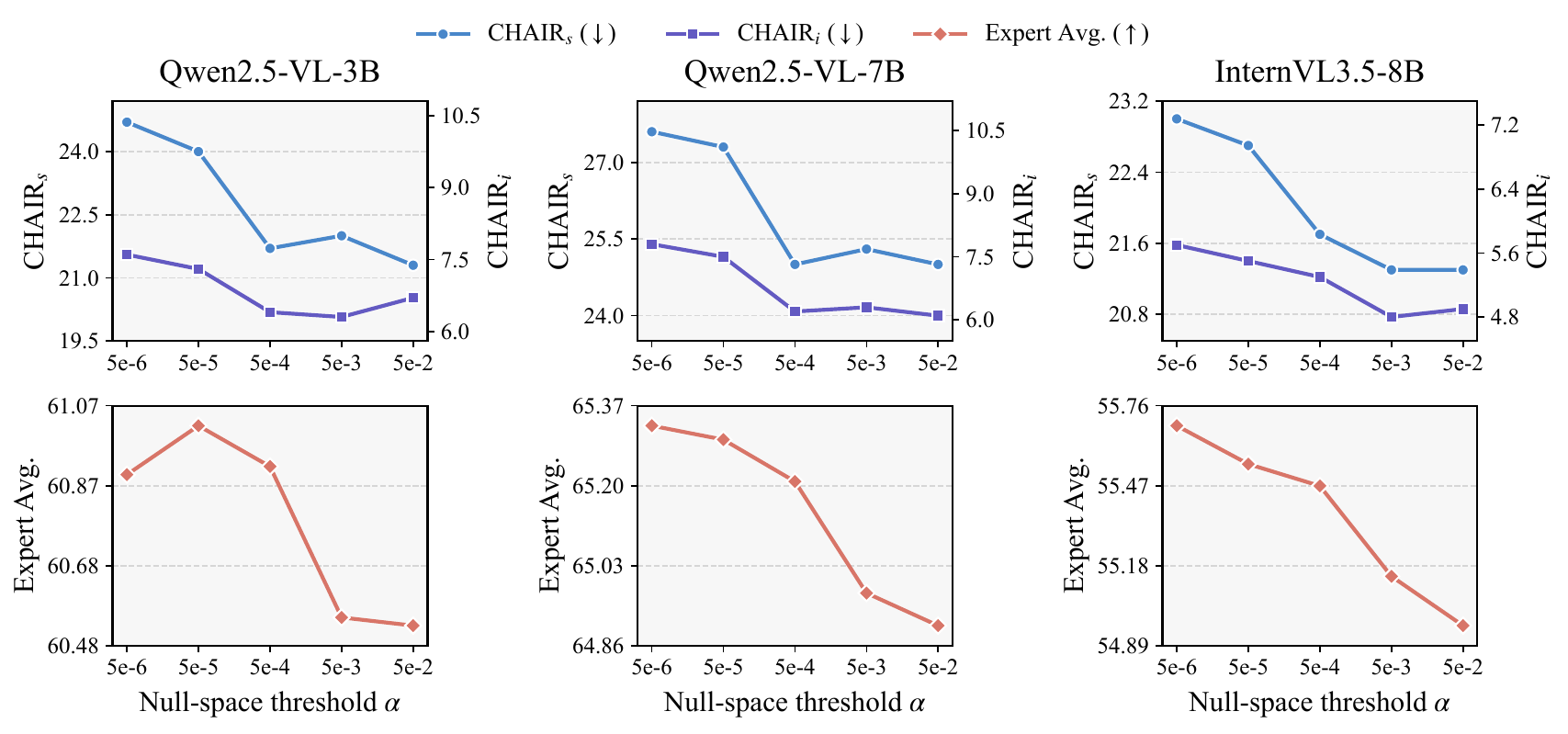}
\caption{Sensitivity to the null-space threshold $\alpha$ across three model configurations. The upper panels report Object HalBench CHAIR$_s$ and CHAIR$_i$, while the lower panels report Expert Avg.}
\label{fig:threshold-sensitivity}
\end{figure*}

\paragraph{Effect of null-space projection.}
We conduct matched ablations on two MLLM configurations. Within each configuration, both variants use the same training setup, differing only in null-space projection. Across these settings, removing null-space projection decreases Expert Avg.\ by $1.64$ and $1.34$ points relative to MergeHEIR on Qwen2.5-VL-7B and InternVL3.5-8B, respectively, while yielding comparable hallucination performance. This pronounced asymmetry shows that unconstrained adaptation incurs substantially greater damage to inherited expertise for only marginal behavioral gains. By restricting the accumulated post-merge displacement to the expert-task null spaces, MergeHEIR achieves a markedly better hallucination--retention trade-off, preserving expert performance without forfeiting the primary hallucination improvements.

\paragraph{Sensitivity to the null-space threshold.}
Figure~\ref{fig:threshold-sensitivity} illustrates the effect of the relative spectral threshold $\alpha$ on hallucination mitigation and expert retention. Across the three model configurations, larger values of $\alpha$ generally yield lower CHAIR$_s$ and CHAIR$_i$, whereas smaller values preserve higher Expert Avg. This trend is consistent with Proposition~\ref{prop:threshold-geometry}, showing that $\alpha$ controls the trade-off between hallucination mitigation and inherited-expertise preservation. We therefore use $\alpha=5\times10^{-4}$ as the shared default because it achieves substantial hallucination mitigation while limiting degradation of inherited expert capabilities across model configurations. 

%% file: secs/6_conclusion.tex
\section{Conclusion}
This work investigates hallucination in multimodal model merging. We begin by identifying two coupled challenges: a systematic merging tax and expertise interference during post-merge correction, based on which we propose MergeHEIR, combining multimodal preference optimization with activation-derived null-space projection. We further analyze the mechanism underlying the merging tax and establish guarantees on adaptation geometry and representation perturbation control. Extensive experiments across three MLLM configurations, eight merging methods, and diverse benchmarks validate its effectiveness and generality. We highlight the importance of coordinating hallucination mitigation with inherited-expertise preservation for reliable model merging.

%% file: secs/2_related_work.tex
\section{Related Work}

\textbf{Model merging.}
Model merging consolidates independently fine-tuned models into a shared checkpoint without incurring the inference cost of an ensemble. Static merging methods include interpolation- and arithmetic-based approaches, such as model soups \citep{wortsman2022model} and Task Arithmetic \citep{ilharco2023editing}; sparsification-based approaches, including TIES \citep{yadav2023ties}, DARE \citep{yu2023language}, DELLA \citep{deep2024della}, and Model Breadcrumbs \citep{davari2024model}; and geometry-based or subspace-based approaches, such as Model Stock \citep{jang2024model} and Iso-C \citep{marczak2025no}. Optimization-based methods instead learn merging coefficients or optimize task-vector combinations, including AdaMerging \citep{yang2024adamerging}, WUDI \citep{cheng2025whoever}, and OptMerge \citep{wei2025optmerge}. Recent methods further extend model merging to heterogeneous MLLM capabilities and modalities \citep{du2025adamms,wei2025optmerge}. The merging methods evaluated in our study span these methodological families. MergeHEIR takes the resulting merged checkpoints as its starting point and studies how to mitigate the hallucination degradation introduced by capability consolidation while retaining inherited expertise.

\textbf{Hallucination mitigation in MLLMs.}
Multimodal hallucination refers to generated content that is inconsistent with the visual input. Existing mitigation methods can be broadly divided into training-free and training-based approaches. Training-free methods intervene during inference through contrastive or adaptive decoding, prompting, and post-hoc correction \citep{leng2024mitigating,chen2024halc,yin2024woodpecker,fang2026grounding}. Training-based approaches instead improve visual grounding using hallucination-aware supervision or preference data. RLAIF-V \citep{yu2025rlaif} constructs multimodal preferences from AI feedback, while mDPO \citep{wang2024mdpo} strengthens image dependence through image-conditioned preference optimization. HDPO \citep{fu2025mitigating} and EMPO \citep{wu2025mitigating} further target heterogeneous hallucination causes and fine-grained modality alignment. Recent work also studies hallucination in multimodal reasoning models \citep{dong2025mirage,fang2026seeing,kong2026reasoning} and multi-image settings \citep{yang2026looking}.

\textbf{Null-space constrained adaptation.}
Null-space constraints have been used to mitigate interference in continual learning and model editing \citep{wang2021training,fang2025alphaedit}. Recent work extends this idea to LLM alignment and merging: NSPO projects safety gradients using general-task representations \citep{niu2026mitigating}, while RAIN-Merging projects task vectors using reasoning-related forward features \citep{huang2026rain}. MergeHEIR instead targets hallucination-oriented adaptation after heterogeneous multimodal expert merging, constructing a shared protected subspace from pooled expert-task activations and periodically projecting the accumulated post-merge displacement. We further provide geometric and perturbation guarantees for this projection.

%% file: secs/A_merging_tax_analysis.tex
\section{Mechanistic Interpretation of the Merging Tax}
\label{app:merging-tax-mechanism}

We examine the mechanism underlying the \emph{merging tax}, where model merging increases multimodal hallucination. All constituent experts share the same frozen visual encoder and differ only in their language-side adaptations. For a fixed image, the encoded visual representation is therefore unchanged before and after merging, locating the source of the degradation downstream in how the language-side network uses visual evidence during generation.

Multimodal hallucination occurs when a response makes claims about the visual input that are unsupported by, or inconsistent with, the available visual evidence. During expert specialization, language-side parameters are optimized jointly for each task, forming co-adapted computations that govern how visual cues are propagated, integrated with textual context, and translated into token predictions. Model merging recombines these independently learned adaptations into a new parameter configuration. This recomposition can alter the cross-layer interactions that determine how strongly visual evidence influences generation, even when the visual representation itself remains unchanged. We refer to this disruption in the downstream use of visual evidence as \emph{visual-evidence utilization interference}.

The constituent experts inherit the same pretrained semantic and linguistic structure, while task specialization adapts how this shared knowledge is conditioned on visual input. When merging weakens these image-conditioned adaptations, generation becomes less tightly coupled to the current image and more susceptible to semantic regularities that are not visually supported. Unsupported objects, attributes, quantities, relations, or text can therefore enter the response more readily. The systematic increase in hallucination observed in Figure~\ref{fig:preliminary-merge-hallucination} is consistent with this interpretation, suggesting a shared failure mode in which visual evidence exerts weaker control over generation despite differences among merging methods.

%% file: secs/A_theoretical_analysis.tex
\section{Additional Theoretical Analysis}
\label{app:theory}

This section provides complete proofs for the results in Section~\ref{sec:geometric-analysis}, clarifying how null-space projection in MergeHEIR balances adaptation flexibility and expert-task representation stability. We begin with the projection update and spectral subspace selection, then examine the role of the threshold. The spectral perturbation bound is further extended in two directions: to the corresponding activation distribution under spectral approximation, and to fixed-prefix outputs under local smoothness.

\paragraph{Notation and normalization.}
Consider one adapted linear module with merged weight $W_m\in\mathbb{R}^{d_{\mathrm{out}}\times d_{\mathrm{in}}}$ and accumulated displacement $\Delta W$. Let $H=[H_t]_{t\in\mathcal{T}}\in\mathbb{R}^{d_{\mathrm{in}}\times N}$ concatenate the expert-task calibration activations, and write
\begin{equation}
G=HH^{\top}
=
U\operatorname{diag}(\sigma_1,\ldots,\sigma_{d_{\mathrm{in}}})U^{\top},
\qquad
0\leq\sigma_1\leq\cdots\leq\sigma_{d_{\mathrm{in}}}=\sigma_{\max}.
\end{equation}
Define
\begin{equation}
\sigma_{\mathrm{cut}}=\alpha\sigma_{\max},
\qquad
\mathcal{I}_{\alpha}=\{i:\sigma_i<\sigma_{\mathrm{cut}}\},
\qquad
U_{\mathrm{null}}=[u_i\mid i\in\mathcal{I}_{\alpha}],
\qquad
P=U_{\mathrm{null}}U_{\mathrm{null}}^{\top}.
\end{equation}
Thus $P=P^{\top}=P^2$ and $PG=GP$. For $\widehat{G}=\frac{1}{N}G$, its eigenvalues satisfy $\widehat{\sigma}_i=\frac{\sigma_i}{N}$ and $\widehat{\sigma}_{\max}=\frac{\sigma_{\max}}{N}$. Hence
\begin{equation}
\sigma_i<\sigma_{\mathrm{cut}}
\quad\Longleftrightarrow\quad
\widehat{\sigma}_i<\alpha\widehat{\sigma}_{\max},
\end{equation}
so normalization leaves $P$ unchanged, allowing the same projector to be analyzed using the empirical second moment in the population extension.

\subsection{Proof of Proposition~\ref{prop:min-distortion}}
\label{app:proof-min-distortion}
We first justify the projection of the accumulated displacement onto a fixed admissible subspace $\mathcal{S}_P=\{Z:Z=ZP\}$. The orthogonal decomposition induced by $P$ establishes that $\Delta WP$ minimizes update distortion in Frobenius norm.
\begin{proof}
For any $Z\in\mathcal{S}_{P}$,
\begin{equation}
Z-\Delta W
=
(Z-\Delta WP)-\Delta W(I-P).
\end{equation}
Since $Z-\Delta WP=(Z-\Delta WP)P$, the two terms are Frobenius-orthogonal:
\begin{align}
\left\langle Z-\Delta WP,\Delta W(I-P)\right\rangle_F
&=
\operatorname{tr}\!\left(P(Z-\Delta WP)^{\top}\Delta W(I-P)\right)\\
&=
\operatorname{tr}\!\left((I-P)P(Z-\Delta WP)^{\top}\Delta W\right)
=0.
\end{align}
Therefore,
\begin{equation}
\|Z-\Delta W\|_F^2
=
\|Z-\Delta WP\|_F^2
+
\|\Delta W(I-P)\|_F^2.
\end{equation}
The second term is independent of $Z$, whereas the first is uniquely minimized at $Z=\Delta WP$. Hence, $\Delta WP$ is the unique orthogonal projection of $\Delta W$ onto $\mathcal{S}_{P}$ under the Frobenius inner product. Setting $Z=0$ in the preceding decomposition yields
\begin{equation}
\|\Delta W\|_F^2
=
\|\Delta WP\|_F^2
+
\|\Delta W(I-P)\|_F^2,
\end{equation}
and therefore $\|\Delta WP\|_F\leq\|\Delta W\|_F$.
\end{proof}

\subsection{Proof of Theorem~\ref{thm:spectral-control}}
\label{app:proof-spectral}
The preceding result justifies the projection within a fixed subspace. To relate this geometric property to expert-task stability, we analyze the spectral construction, establishing maximum dimensionality under the sensitivity constraint and then deriving pooled and task-specific perturbation bounds.
\begin{proof}
\noindent\textit{Step 1: Maximum-dimensional admissible subspace.}
Let $k=|\mathcal{I}_{\alpha}|=\operatorname{rank}(P)$. Every nonzero vector $v\in\operatorname{range}(P)$ can be written as
\begin{equation}
v=\sum_{i\in\mathcal{I}_{\alpha}}c_i u_i.
\end{equation}
Using the orthonormality of the eigenvectors,
\begin{equation}
v^{\top}Gv
=
\sum_{i\in\mathcal{I}_{\alpha}}\sigma_i c_i^2
<
\sigma_{\mathrm{cut}}
\sum_{i\in\mathcal{I}_{\alpha}}c_i^2
=
\sigma_{\mathrm{cut}}\|v\|_2^2.
\end{equation}
Thus, $\operatorname{range}(P)$ satisfies the prescribed activation-sensitivity condition.

Now consider any subspace $\mathcal{V}\subseteq\mathbb{R}^{d_{\mathrm{in}}}$ with $\dim(\mathcal{V})>k$, and define
\begin{equation}
\mathcal{U}_{\mathrm{high}}
=
\operatorname{span}\{u_i:\sigma_i\geq\sigma_{\mathrm{cut}}\}.
\end{equation}
Since $\dim(\mathcal{U}_{\mathrm{high}})=d_{\mathrm{in}}-k$, the dimension inequality
\begin{equation}
\dim(\mathcal{V})+\dim(\mathcal{U}_{\mathrm{high}})
>
d_{\mathrm{in}}
\end{equation}
implies that $\mathcal{V}\cap\mathcal{U}_{\mathrm{high}}$ contains a nonzero vector. For any such vector $v$,
\begin{equation}
v^{\top}Gv
\geq
\sigma_{\mathrm{cut}}\|v\|_2^2.
\end{equation}
Hence, no subspace of dimension greater than $k$ can satisfy the strict sensitivity condition for every nonzero vector. Since $\operatorname{range}(P)$ has dimension $k$ and satisfies this condition, it is maximum-dimensional.

\noindent\textit{Step 2: Pooled and task-specific perturbation bounds.}
We now translate the spectral sensitivity constraint into a bound on the representation perturbation induced by the projected displacement. For the pooled activation matrix,
\begin{align}
\|\Delta WPH\|_F^2
&=
\operatorname{tr}\!\left(
\Delta WPHH^{\top}P\Delta W^{\top}
\right)\\
&=
\operatorname{tr}\!\left(
\Delta WPGP\Delta W^{\top}
\right).
\end{align}
By the spectral definition of $P$,
\begin{equation}
PGP
=
\sum_{i\in\mathcal{I}_{\alpha}}
\sigma_i u_i u_i^{\top}
\preceq
\sigma_{\mathrm{cut}}
\sum_{i\in\mathcal{I}_{\alpha}}
 u_i u_i^{\top}
=
\sigma_{\mathrm{cut}}P.
\end{equation}
Therefore,
\begin{align}
\|\Delta WPH\|_F^2
&\leq
\sigma_{\mathrm{cut}}
\operatorname{tr}\!\left(
\Delta WP\Delta W^{\top}
\right)\\
&=
\sigma_{\mathrm{cut}}
\|\Delta WP\|_F^2.
\end{align}

To obtain task-specific bounds from the pooled construction, let $G_t=H_tH_t^{\top}$. Since $H$ concatenates all task-specific activation matrices,
\begin{equation}
G=\sum_{t\in\mathcal{T}}G_t,
\qquad
0\preceq G_t\preceq G.
\end{equation}
Applying the congruence transformation induced by $P$ gives
\begin{equation}
PG_tP
\preceq
PGP
\preceq
\sigma_{\mathrm{cut}}P.
\end{equation}
It follows that
\begin{align}
\|\Delta WPH_t\|_F^2
&=
\operatorname{tr}\!\left(
\Delta WPG_tP\Delta W^{\top}
\right)\\
&\leq
\sigma_{\mathrm{cut}}
\operatorname{tr}\!\left(
\Delta WP\Delta W^{\top}
\right)\\
&=
\sigma_{\mathrm{cut}}
\|\Delta WP\|_F^2.
\end{align}
Substituting $\sigma_{\mathrm{cut}}=\alpha\sigma_{\max}$ yields the stated pooled and task-specific bounds.
\end{proof}

\subsection{Proof of Proposition~\ref{prop:threshold-geometry}}
\label{app:threshold-monotonicity}
The spectral threshold determines both the admissible displacement space and its perturbation bound. We characterize this balance by comparing the retained update and calibration perturbation for a fixed displacement across nested spectral subspaces.
\begin{proof}
Let
\begin{equation}
\mathcal{I}_{\alpha}
=
\{i:\sigma_i<\alpha\sigma_{\max}\}.
\end{equation}
Since $\alpha_1\leq\alpha_2$,
\begin{equation}
\mathcal{I}_{\alpha_1}
\subseteq
\mathcal{I}_{\alpha_2}.
\end{equation}
Therefore,
\begin{equation}
Q
=
\sum_{i\in\mathcal{I}_{\alpha_2}\setminus\mathcal{I}_{\alpha_1}}
u_i u_i^{\top}
\end{equation}
is an orthogonal projector satisfying
\begin{equation}
P_2=P_1+Q,
\qquad
P_1Q=QP_1=0.
\end{equation}
The range inclusion, projector identities, Loewner ordering, and rank inequality follow immediately.

Using the orthogonality of $P_1$ and $Q$,
\begin{equation}
\|\Delta WP_2\|_F^2
=
\|\Delta W(P_1+Q)\|_F^2
=
\|\Delta WP_1\|_F^2
+
\|\Delta WQ\|_F^2.
\end{equation}
Likewise, since
\begin{equation}
I-P_1=(I-P_2)+Q
\end{equation}
and $(I-P_2)Q=0$, we obtain
\begin{equation}
\|\Delta W(I-P_1)\|_F^2
=
\|\Delta W(I-P_2)\|_F^2
+
\|\Delta WQ\|_F^2.
\end{equation}

Finally, $P_1$ and $Q$ project onto disjoint eigenspaces of $G$, so
\begin{equation}
P_1GQ=QGP_1=0.
\end{equation}
Hence,
\begin{align}
\|\Delta WP_2H\|_F^2
&=
\operatorname{tr}\!\left(
\Delta W(P_1+Q)G(P_1+Q)\Delta W^{\top}
\right)\\
&=
\operatorname{tr}\!\left(
\Delta WP_1GP_1\Delta W^{\top}
\right)
+
\operatorname{tr}\!\left(
\Delta WQGQ\Delta W^{\top}
\right)\\
&=
\|\Delta WP_1H\|_F^2
+
\|\Delta WQH\|_F^2.
\end{align}
\end{proof}

By relating the retained update, projection-induced distortion, and calibration perturbation, these identities characterize how $\alpha$ balances adaptation flexibility and representation stability.

Because the retained index set changes only when $\alpha\sigma_{\max}$ crosses an eigenvalue of $G$, the projector $P_{\alpha}$ is piecewise constant between the normalized spectral breakpoints $\{\sigma_i/\sigma_{\max}\}_{i=1}^{d_{\mathrm{in}}}$. This spectral structure also explains why threshold sensitivity can exhibit plateau-like regions.

\subsection{Proof of Corollary~\ref{cor:population-extension}}
\label{app:proof-population-extension}
The calibration bound controls perturbations through the empirical second moment $\widehat{G}$. To extend this control to the corresponding activation distribution, we compare $\widehat{G}$ with $\Gamma$ and account for their spectral discrepancy.
\begin{proof}
The condition
\begin{equation}
\|\widehat{G}-\Gamma\|_2\leq\varepsilon
\end{equation}
implies the Loewner-order bounds
\begin{equation}
-\varepsilon I
\preceq
\Gamma-\widehat{G}
\preceq
\varepsilon I.
\end{equation}
Applying the congruence transformation induced by $P$ and using $P^2=P$ gives
\begin{equation}
P\Gamma P
\preceq
P\widehat{G}P+\varepsilon P.
\end{equation}
Since $P$ retains only eigenvectors of $\widehat{G}$ whose eigenvalues are below $\alpha\widehat{\sigma}_{\max}$,
\begin{equation}
P\widehat{G}P
\preceq
\alpha\widehat{\sigma}_{\max}P.
\end{equation}
Consequently,
\begin{equation}
P\Gamma P
\preceq
\left(
\alpha\widehat{\sigma}_{\max}
+
\varepsilon
\right)P.
\end{equation}
For an activation vector $h$ drawn from the corresponding distribution, the definition $\Gamma=\mathbb{E}[hh^{\top}]$ gives
\begin{align}
\mathbb{E}\!\left[
\|\Delta WPh\|_2^2
\right]
&=
\operatorname{tr}\!\left(
\Delta WP\Gamma P\Delta W^{\top}
\right)\\
&\leq
\left(
\alpha\widehat{\sigma}_{\max}
+
\varepsilon
\right)
\operatorname{tr}\!\left(
\Delta WP\Delta W^{\top}
\right)\\
&=
\left(
\alpha\widehat{\sigma}_{\max}
+
\varepsilon
\right)
\|\Delta WP\|_F^2.
\end{align}
\end{proof}

Corollary~\ref{cor:population-extension} extends the empirical perturbation bound to the corresponding activation distribution, with the spectral discrepancy entering the bound additively through $\varepsilon$. Thus, the thresholded null-space construction provides representation-level perturbation control beyond the collected calibration activations under the stated spectral condition.

\subsection{From Representation Perturbations to Fixed-Prefix Outputs}
\label{app:fixed-prefix-logits}

To relate representation stability to model behavior, we combine the module-level bound in Theorem~\ref{thm:spectral-control} with local smoothness assumptions. This yields perturbation bounds for logits and next-token distributions under fixed calibration prefixes. Let $\theta_m$ denote the merged checkpoint, and let $\boldsymbol{\Delta}_P$ denote the model-wide projected displacement whose $\ell$-th adapted block is $\Delta W_{\ell}P_{\ell}$. Define
\begin{equation}
\|\boldsymbol{\Delta}_P\|_{\mathrm{blk}}^2
=
\sum_{\ell=1}^{L}
\|\Delta W_{\ell}P_{\ell}\|_F^2,
\label{eq:block-displacement-norm}
\end{equation}
where $L$ is the number of adapted modules. Let $H_{\ell}$ collect the inputs to module $\ell$ under the merged checkpoint $\theta_m$ at the evaluated positions, and let $Z_{\theta}$ denote the corresponding stacked logits.

\paragraph{Local output smoothness.}
To quantify how module-level perturbations propagate to the logits, we use a first-order Taylor expansion with a quadratic remainder \citep{nesterov2013introductory} and bounded local sensitivities \citep{fazlyab2019efficient}. For a fixed collection of calibration prefixes, assume that the logit map admits the following local expansion around $\theta_m$:
\begin{equation}
Z_{\theta_m+\boldsymbol{\Delta}_P}
-
Z_{\theta_m}
=
\sum_{\ell=1}^{L}
\mathcal{J}_{\ell}
\!\left[
\Delta W_{\ell}P_{\ell}H_{\ell}
\right]
+
\mathcal{R}(\boldsymbol{\Delta}_P),
\label{eq:local-logit-expansion}
\end{equation}
where $\mathcal{J}_{\ell}$ is a bounded linear map describing the local response of the stacked logits to the output perturbation at module $\ell$, with
\begin{equation}
\|\mathcal{J}_{\ell}[X]\|_F
\leq
\kappa_{\ell}\|X\|_F.
\label{eq:local-operator-bound}
\end{equation}
Assume further that the nonlinear remainder satisfies
\begin{equation}
\|\mathcal{R}(\boldsymbol{\Delta}_P)\|_F
\leq
C\|\boldsymbol{\Delta}_P\|_{\mathrm{blk}}^2.
\label{eq:local-remainder-bound}
\end{equation}
For compactness, define
\begin{equation}
\mathcal{B}_{\alpha}(\boldsymbol{\Delta}_P)
=
\sum_{\ell=1}^{L}
\kappa_{\ell}
\sqrt{\sigma_{\mathrm{cut}}^{(\ell)}}
\|\Delta W_{\ell}P_{\ell}\|_F
+
C\|\boldsymbol{\Delta}_P\|_{\mathrm{blk}}^2,
\label{eq:fixed-prefix-bound}
\end{equation}
where $\sigma_{\mathrm{cut}}^{(\ell)}=\alpha\sigma_{\max}^{(\ell)}$.

\paragraph{Fixed-prefix output control.}
Under the local smoothness condition above, the logit perturbation satisfies
\begin{equation}
\left\|
Z_{\theta_m+\boldsymbol{\Delta}_P}
-
Z_{\theta_m}
\right\|_F
\leq
\mathcal{B}_{\alpha}(\boldsymbol{\Delta}_P).
\label{eq:fixed-prefix-logit-bound}
\end{equation}
Moreover, if $p_j$ and $q_j$ denote the next-token distributions induced by the merged and post-projection models, respectively, at fixed prefix $j$, then
\begin{equation}
\sum_j
D_{\mathrm{KL}}(p_j\|q_j)
\leq
\frac{1}{4}
\mathcal{B}_{\alpha}(\boldsymbol{\Delta}_P)^2.
\label{eq:fixed-prefix-kl-bound}
\end{equation}

\paragraph{Derivation.}
Applying the triangle inequality, the local operator bounds, and Theorem~\ref{thm:spectral-control} module-wise gives
\begin{align}
\left\|
Z_{\theta_m+\boldsymbol{\Delta}_P}
-
Z_{\theta_m}
\right\|_F
&\leq
\sum_{\ell=1}^{L}
\kappa_{\ell}
\|\Delta W_{\ell}P_{\ell}H_{\ell}\|_F
+
\|\mathcal{R}(\boldsymbol{\Delta}_P)\|_F\\
&\leq
\sum_{\ell=1}^{L}
\kappa_{\ell}
\sqrt{\sigma_{\mathrm{cut}}^{(\ell)}}
\|\Delta W_{\ell}P_{\ell}\|_F
+
C\|\boldsymbol{\Delta}_P\|_{\mathrm{blk}}^2\\
&=
\mathcal{B}_{\alpha}(\boldsymbol{\Delta}_P),
\end{align}
which establishes Equation~\eqref{eq:fixed-prefix-logit-bound}.

To translate logit perturbations into changes in next-token distributions, we use the log-partition function
\begin{equation}
f(z)=\log\sum_i\exp(z_i).
\end{equation}
For categorical exponential families, the KL divergence can be expressed as the Bregman divergence induced by the log-partition function \citep{banerjee2005clustering,wainwright2008graphical}.
For $\rho=\operatorname{softmax}(z)$, its Hessian is
\begin{equation}
\nabla^2f(z)
=
\operatorname{diag}(\rho)-\rho\rho^{\top}.
\end{equation}
For any vector $x$,
\begin{equation}
x^{\top}\nabla^2f(z)x
=
\operatorname{Var}_{i\sim\rho}(x_i)
\leq
\frac{(\max_i x_i-\min_i x_i)^2}{4}
\leq
\frac{1}{2}\|x\|_2^2.
\end{equation}
Hence $\|\nabla^2f(z)\|_2\leq1/2$, so $f$ is $1/2$-smooth. For $p=\operatorname{softmax}(z)$ and $q=\operatorname{softmax}(z+u)$,
\begin{align}
D_{\mathrm{KL}}(p\|q)
&=
f(z+u)-f(z)-\nabla f(z)^{\top}u\\
&\leq
\frac{1}{4}\|u\|_2^2.
\end{align}
Applying this inequality at each fixed prefix and using the fact that $Z_{\theta}$ stacks the corresponding logit vectors gives
\begin{align}
\sum_jD_{\mathrm{KL}}(p_j\|q_j)
&\leq
\frac{1}{4}
\left\|
Z_{\theta_m+\boldsymbol{\Delta}_P}
-
Z_{\theta_m}
\right\|_F^2\\
&\leq
\frac{1}{4}
\mathcal{B}_{\alpha}(\boldsymbol{\Delta}_P)^2,
\end{align}
which establishes Equation~\eqref{eq:fixed-prefix-kl-bound}.

\paragraph{Leading-order implication.}
The first term in $\mathcal{B}_{\alpha}(\boldsymbol{\Delta}_P)$ is of order $\mathcal{O}(\|\boldsymbol{\Delta}_P\|_{\mathrm{blk}})$. Therefore, as $\|\boldsymbol{\Delta}_P\|_{\mathrm{blk}}\to0$, expanding Equation~\eqref{eq:fixed-prefix-kl-bound} yields
\begin{equation}
\sum_jD_{\mathrm{KL}}(p_j\|q_j)
\leq
\frac{1}{4}
\left(
\sum_{\ell=1}^{L}
\kappa_{\ell}
\sqrt{\sigma_{\mathrm{cut}}^{(\ell)}}
\|\Delta W_{\ell}P_{\ell}\|_F
\right)^2
+
\mathcal{O}\!\left(
\|\boldsymbol{\Delta}_P\|_{\mathrm{blk}}^3
\right).
\label{eq:fixed-prefix-leading-order}
\end{equation}
Thus, the fixed-prefix distribution shift admits a quadratic leading-order upper bound in the retained displacement, jointly governed by the downstream sensitivities $\{\kappa_{\ell}\}_{\ell=1}^{L}$ and the layer-wise spectral cutoffs. This extends the representation-level perturbation analysis to the model's local output behavior under shared calibration contexts.

\paragraph{Theoretical implications.}
\label{app:theory-scope}
Taken together, these results connect the geometry of MergeHEIR's projected updates to representation stability and local output behavior.  The projection minimizes update distortion within a maximum-dimensional subspace under the prescribed sensitivity constraint, while $\alpha$ balances adaptation flexibility and calibration perturbation.  The population extension accounts for spectral approximation error, and the fixed-prefix analysis propagates module-level perturbation bounds to logits and next-token distributions under local smoothness.  These results explain how projection limits interference with expert-task representations during hallucination-oriented adaptation, complementing the empirical evaluation of expert retention.

%% file: secs/A_experimental_details.tex
\section{Experimental Details}
\label{app:experimental_details}

\subsection{Training Task-Specialized Experts}
\label{app:expert-training}

\paragraph{Training data.}
For each backbone, we independently train four task-specialized experts for chart understanding, geometry reasoning, OCR, and visual question answering. Table~\ref{tab:expert-training-data} summarizes the datasets and training-set sizes used for each expert.

\begin{table*}[h]
\vspace{-1.5em}
\caption{Summary of the training datasets used to construct the four task-specialized experts.}
\label{tab:expert-training-data}
\centering
\small
\renewcommand{\arraystretch}{1.15}
\setlength{\tabcolsep}{6pt}
\arrayrulecolor{gray!60}
\begin{tabular}{
    >{\columncolor{gray!12}\bfseries\raggedright\arraybackslash}l|
    >{\centering\arraybackslash}c|
    >{\raggedright\arraybackslash}p{0.72\textwidth}
}
\hline
\rowcolor{gray!25}
\textbf{Task Category} & \textbf{Size} & \textbf{Training Datasets} \\
\hline
VQA & 589K & GQA \citep{hudson2019gqa}, CogVLM-Multiround \citep{wang2024cogvlm}, CogVLM-Singleround \citep{wang2024cogvlm} \\
\hline
Geometry & 177K & GeoQA+ \citep{cao2022augmented}, G-LLaVA \citep{gao2025g} \\
\hline
Chart & 219K & ChartQA \citep{masry2022chartqa}, DVQA \citep{kafle2018dvqa} \\
\hline
OCR & 239K & OCRVQA \citep{mishra2019ocr}, SynthDoG \citep{kim2022ocr}, DocVQA \citep{mathew2021docvqa}, TextVQA \citep{singh2019towards}, LLaVAR \citep{zhang2023llavar}, ST-VQA \citep{biten2019scene} \\
\hline
\end{tabular}
\arrayrulecolor{black}
\vspace{-1.9em}
\end{table*}

\paragraph{Training configuration.}
All experts are trained with LoRA applied only to non-visual modules, while all visual-side parameters remain frozen, using $(r,\alpha_{\mathrm{LoRA}})=(32,64)$ for Qwen2.5-VL-3B and $(r,\alpha_{\mathrm{LoRA}})=(16,32)$ for Qwen2.5-VL-7B and InternVL3.5-8B. The learning rate is set to $2\times10^{-5}$ for Qwen2.5-VL-3B and $1\times10^{-5}$ for the two larger configurations, with a cosine learning-rate schedule. Each expert is trained for one epoch to limit deviation from the shared initialization and preserve compatibility for subsequent model merging.

\subsection{Evaluation Benchmarks}
\label{app:evaluation-benchmarks}

\paragraph{Expert-task benchmarks.} \textbf{Chart understanding.} ChartQA \citep{masry2022chartqa} evaluates question answering over charts, requiring models to extract visualized information and perform logical or arithmetic reasoning over chart content. \textbf{Geometry reasoning.} MATH-Vision \citep{wang2024measuring} contains competition-sourced visual mathematics problems that require joint reasoning over visual contexts and textual problem statements, while Geometry3K \citep{lu2021inter} focuses on solving geometry problems by integrating textual conditions with diagram structure and geometric relations. \textbf{OCR.} TextVQA \citep{singh2019towards} evaluates the ability to read and reason about text embedded in natural images, whereas DocVQA \citep{mathew2021docvqa} requires question answering over document images using their textual content and document structure. \textbf{Visual question answering.} GQA \citep{hudson2019gqa} emphasizes compositional reasoning over real-world scenes, while OKVQA \citep{marino2019ok} contains questions that require knowledge beyond the information directly visible in the image. We report the unweighted mean over these seven benchmarks as \textit{Expert Avg.}

\paragraph{Hallucination benchmarks.}
\textbf{AMBER.} AMBER \citep{wang2023amber} evaluates generative and discriminative hallucinations; we use its generative image-description setting and report CHAIR, HalRate, and Cog.
\textbf{Object HalBench.} Object HalBench \citep{rohrbach2018object} measures object hallucination in image descriptions and reports response-level $\mathrm{CHAIR}_{s}$ and mention-level $\mathrm{CHAIR}_{i}$.
\textbf{POPE.} POPE \citep{li2023evaluating} evaluates object hallucination through binary object-existence questions under Random, Popular, and Adversarial splits; we report accuracy and F1.
\textbf{GPT-4-assisted evaluation.} Following \citet{zhao2023beyond}, we use Visual Genome \citep{krishna2017visual} annotations and GPT-4o-mini to assess fine-grained hallucinations in generated descriptions, reporting Sentence-level Hallucination Ratio (SHR).

%% file: secs/A_additional_results.tex
\section{Additional Results}
\label{app:additional-results}

\paragraph{Complete InternVL3.5-8B results.}
\label{app:internvl-results}

Tables~\ref{tab:internvl-expert} and~\ref{tab:internvl-hallucination} provide the complete matched results for InternVL3.5-8B, complementing the Qwen2.5-VL results in the main text. Across the eight model-merging methods, MergeHEIR consistently improves hallucination performance while largely maintaining expert-task performance. This pattern is consistent with the intended role of MergeHEIR in coordinating hallucination correction with the retention of inherited expertise.

\paragraph{Complete POPE results.}
\label{app:complete-pope}

\begin{table*}[t]
\caption{Expert-task results on InternVL3.5-8B.}
\label{tab:internvl-expert}
\centering
\scriptsize
\setlength{\tabcolsep}{1.0pt}
\begin{tabular}{@{}C{70pt}C{38pt}|C{27pt}|cc|C{27pt}C{27pt}|C{27pt}C{27pt}|C{27pt}@{}}
\toprule
\multirow[c]{2}{*}{Merging method} & \multirow[c]{2}{*}{Stage} & \multicolumn{1}{c|}{Chart} & \multicolumn{2}{c|}{Geometry} & \multicolumn{2}{c|}{OCR} & \multicolumn{2}{c|}{VQA} & \multirow[c]{2}{*}{Avg.}\\
 & & ChartQA & MATH-Vision & Geometry3K & TextVQA & DocVQA & GQA & OKVQA\\
\midrule
Task Arithmetic & Merged & 76.20 & 23.68 & 60.23 & 63.83 & 56.46 & 59.96 & 42.17 & 54.65\\
\rowcolor{mergeheirrow} & MergeHEIR & 76.08 & 23.03 & 60.57 & 63.62 & 56.26 & 60.84 & 39.61 & 54.29\\
TIES Merging & Merged & 76.52 & 24.67 & 60.07 & 64.21 & 56.85 & 60.93 & 43.87 & 55.30\\
\rowcolor{mergeheirrow} & MergeHEIR & 76.76 & 23.03 & 59.73 & 64.50 & 57.29 & 61.78 & 42.71 & 55.11\\
DELLA Merging & Merged & 77.08 & 24.67 & 60.07 & 64.09 & 56.63 & 60.57 & 43.39 & 55.21\\
\rowcolor{mergeheirrow} & MergeHEIR & 77.12 & 23.68 & 59.73 & 63.97 & 56.54 & 61.50 & 42.04 & 54.94\\
Breadcrumbs Merging & Merged & 76.80 & 24.34 & 60.07 & 64.14 & 56.69 & 60.41 & 43.12 & 55.08\\
\rowcolor{mergeheirrow} & MergeHEIR & 76.92 & 24.34 & 60.73 & 64.07 & 56.77 & 61.40 & 40.44 & 54.95\\
Model Stock & Merged & 75.72 & 23.68 & 59.90 & 63.76 & 55.90 & 59.49 & 40.67 & 54.16\\
\rowcolor{mergeheirrow} & MergeHEIR & 75.52 & 22.04 & 59.23 & 63.62 & 56.19 & 59.76 & 39.52 & 53.70\\
Iso-C & Merged & 75.76 & 24.01 & 60.07 & 63.61 & 56.13 & 59.53 & 40.83 & 54.28\\
\rowcolor{mergeheirrow} & MergeHEIR & 75.68 & 22.04 & 59.23 & 63.49 & 56.06 & 59.79 & 39.07 & 53.63\\
WUDI Merging & Merged & 77.36 & 25.33 & 60.57 & 64.45 & 56.89 & 61.11 & 44.42 & 55.73\\
\rowcolor{mergeheirrow} & MergeHEIR & 77.28 & 23.68 & 60.07 & 64.40 & 57.49 & 61.63 & 43.77 & 55.47\\
OptMerge & Merged & 77.68 & 25.33 & 60.57 & 64.29 & 57.09 & 61.64 & 46.33 & 56.13\\
\rowcolor{mergeheirrow} & MergeHEIR & 77.60 & 25.00 & 61.23 & 64.30 & 57.04 & 62.78 & 44.11 & 56.01\\
\bottomrule
\end{tabular}
\end{table*}

\begin{table*}[t]
\vspace{-0.8em}
\caption{Hallucination-task results on InternVL3.5-8B.}
\label{tab:internvl-hallucination}
\centering
\scriptsize
\setlength{\tabcolsep}{1.5pt}
\begin{tabular}{@{}C{70pt}C{38pt}|ccc|cc|C{26pt}C{26pt}|c@{}}
\toprule
\multirow[c]{2}{*}{Merging method} & \multirow[c]{2}{*}{Stage} & \multicolumn{3}{c|}{AMBER} & \multicolumn{2}{c|}{Object HalBench} & \multicolumn{2}{c|}{POPE Adversarial} & \multicolumn{1}{c}{GPT-4 Eval}\\
 & & CHAIR$\downarrow$ & HalRate$\downarrow$ & Cog$\downarrow$ & CHAIR$_s\downarrow$ & CHAIR$_i\downarrow$ & Acc.$\uparrow$ & F1$\uparrow$ & SHR$\downarrow$\\
\midrule
Task Arithmetic & Merged & 8.3 & 66.5 & 7.3 & 25.3 & 6.7 & 73.9& 78.6 & 36.2\\
\rowcolor{mergeheirrow} & MergeHEIR & 7.6 & 63.1 & 6.3 & 24.3 & 6.7 & 77.5& 80.6 & 32.8\\
TIES Merging & Merged & 8.1 & 68.7 & 7.1 & 24.0 & 6.2 & 76.5& 80.0 & 36.3\\
\rowcolor{mergeheirrow} & MergeHEIR & 7.2 & 62.3 & 6.1 & 20.3 & 5.3 & 80.6 & 82.5 & 33.6\\
DELLA Merging & Merged & 8.1 & 68.7 & 7.3 & 23.7 & 6.2 & 75.3& 79.4 & 36.9\\
\rowcolor{mergeheirrow} & MergeHEIR & 7.3 & 63.4 & 6.4 & 20.7 & 5.2 & 79.1 & 81.5 & 33.1\\
Breadcrumbs Merging & Merged & 8.2 & 69.9 & 7.1 & 22.7 & 5.9 & 75.1& 79.2 & 36.2\\
\rowcolor{mergeheirrow} & MergeHEIR & 6.9 & 63.0 & 6.3 & 20.3 & 5.1 & 79.2 & 81.5 & 32.4\\
Model Stock & Merged & 8.0 & 65.6 & 6.8 & 27.7 & 6.8 & 73.3& 78.2 & 36.7\\
\rowcolor{mergeheirrow} & MergeHEIR & 7.7 & 63.3 & 6.2 & 23.3 & 6.4 & 77.2& 80.5 & 34.0\\
Iso-C & Merged & 8.4 & 66.6 & 6.9 & 27.7 & 6.8 & 73.5& 78.3 & 36.7\\
\rowcolor{mergeheirrow} & MergeHEIR & 7.8 & 62.8 & 6.0 & 23.7 & 6.5 & 77.1& 80.3 & 33.8\\
WUDI Merging & Merged & 8.0 & 68.4 & 6.9 & 24.0 & 5.9 & 78.2& 80.9 & 35.9\\
\rowcolor{mergeheirrow} & MergeHEIR & 7.1 & 63.6 & 6.1 & 21.7 & 5.3 & 82.4 & 83.3 & 33.2\\
OptMerge & Merged & 7.6 & 65.0 & 6.8 & 18.0 & 4.2 & 80.8& 82.6 & 36.5\\
\rowcolor{mergeheirrow} & MergeHEIR & 6.8 & 61.2 & 6.0 & 17.3 & 4.8 & 86.6 & 88.2 & 32.4\\
\bottomrule
\end{tabular}
\end{table*}

\begin{table*}[t]
\caption{Complete POPE results for Qwen2.5-VL-3B.}
\label{tab:appendix-pope-Qwen25-VL-3B}
\centering
\scriptsize
\setlength{\tabcolsep}{2.2pt}
\begin{tabular}{@{}C{70pt}C{38pt}|C{30pt}C{30pt}|C{30pt}C{30pt}|C{30pt}C{30pt}@{}}
\toprule
\multirow[c]{2}{*}{Merging method} & \multirow[c]{2}{*}{Stage} & \multicolumn{2}{c|}{POPE Adversarial} & \multicolumn{2}{c|}{POPE Popular} & \multicolumn{2}{c}{POPE Random}\\
 & & Acc.$\uparrow$ & F1$\uparrow$ & Acc.$\uparrow$ & F1$\uparrow$ & Acc.$\uparrow$ & F1$\uparrow$\\
\midrule
Task Arithmetic & Merged & 82.17 & 83.47 & 85.83 & 86.41 & 91.96 & 92.03\\
\rowcolor{mergeheirrow} & MergeHEIR & 85.80 & 86.05 & 88.63 & 88.51 & 92.23 & 92.08\\
TIES Merging & Merged & 82.17 & 83.46 & 85.77 & 86.34 & 91.89 & 91.96\\
\rowcolor{mergeheirrow} & MergeHEIR & 85.80 & 86.06 & 88.70 & 88.58 & 92.20 & 92.05\\
DELLA Merging & Merged & 82.07 & 83.37 & 85.97 & 86.50 & 91.89 & 91.96\\
\rowcolor{mergeheirrow} & MergeHEIR & 85.77 & 86.04 & 88.73 & 88.62 & 92.23 & 92.09\\
Breadcrumbs Merging & Merged & 81.90 & 83.21 & 85.70 & 86.25 & 91.82 & 91.88\\
\rowcolor{mergeheirrow} & MergeHEIR & 85.70 & 85.99 & 88.67 & 88.57 & 92.30 & 92.16\\
Model Stock & Merged & 82.87 & 83.91 & 87.17 & 87.44 & 91.96 & 91.97\\
\rowcolor{mergeheirrow} & MergeHEIR & 86.60 & 86.45 & 89.93 & 89.52 & 92.20 & 92.02\\
Iso-C & Merged & 82.77 & 83.87 & 87.23 & 87.53 & 91.99 & 92.02\\
\rowcolor{mergeheirrow} & MergeHEIR & 85.73 & 86.05 & 88.70 & 88.62 & 92.30 & 92.18\\
WUDI Merging & Merged & 85.37 & 85.24 & 88.00 & 87.57 & 90.82 & 90.47\\
\rowcolor{mergeheirrow} & MergeHEIR & 89.13 & 87.92 & 90.63 & 89.67 & 91.41 & 90.99\\
OptMerge & Merged & 83.53 & 84.16 & 86.80 & 86.89 & 91.75 & 91.62\\
\rowcolor{mergeheirrow} & MergeHEIR & 87.33 & 86.95 & 89.73 & 89.17 & 92.13 & 91.78\\
\bottomrule
\end{tabular}
\end{table*}

\begin{table*}[t]
\caption{Complete POPE results for Qwen2.5-VL-7B.}
\label{tab:appendix-pope-Qwen25-VL-7B}
\centering
\scriptsize
\setlength{\tabcolsep}{2.2pt}
\begin{tabular}{@{}C{70pt}C{38pt}|C{30pt}C{30pt}|C{30pt}C{30pt}|C{30pt}C{30pt}@{}}
\toprule
\multirow[c]{2}{*}{Merging method} & \multirow[c]{2}{*}{Stage} & \multicolumn{2}{c|}{POPE Adversarial} & \multicolumn{2}{c|}{POPE Popular} & \multicolumn{2}{c}{POPE Random}\\
 & & Acc.$\uparrow$ & F1$\uparrow$ & Acc.$\uparrow$ & F1$\uparrow$ & Acc.$\uparrow$ & F1$\uparrow$\\
\midrule
Task Arithmetic & Merged & 83.37 & 80.70 & 83.87 & 81.17 & 84.09 & 81.84\\
\rowcolor{mergeheirrow} & MergeHEIR & 86.97 & 86.00 & 88.20 & 87.09 & 88.93 & 88.14\\
TIES Merging & Merged & 85.53 & 84.41 & 87.27 & 86.02 & 87.84 & 86.91\\
\rowcolor{mergeheirrow} & MergeHEIR & 89.17 & 88.05 & 91.07 & 90.19 & 92.34 & 92.06\\
DELLA Merging & Merged & 84.00 & 81.85 & 84.67 & 82.47 & 85.26 & 83.46\\
\rowcolor{mergeheirrow} & MergeHEIR & 86.97 & 86.50 & 88.33 & 87.68 & 89.86 & 89.31\\
Breadcrumbs Merging & Merged & 83.93 & 81.69 & 84.57 & 82.28 & 85.12 & 83.24\\
\rowcolor{mergeheirrow} & MergeHEIR & 87.47 & 86.87 & 88.70 & 88.02 & 89.66 & 89.45\\
Model Stock & Merged & 83.20 & 80.36 & 83.63 & 80.77 & 83.78 & 81.37\\
\rowcolor{mergeheirrow} & MergeHEIR & 86.80 & 85.65 & 87.77 & 86.56 & 88.52 & 87.62\\
Iso-C & Merged & 83.37 & 80.53 & 83.73 & 80.88 & 83.81 & 81.42\\
\rowcolor{mergeheirrow} & MergeHEIR & 86.70 & 85.54 & 87.70 & 86.48 & 88.52 & 87.60\\
WUDI Merging & Merged & 86.00 & 85.10 & 87.90 & 86.85 & 88.63 & 87.87\\
\rowcolor{mergeheirrow} & MergeHEIR & 89.43 & 88.94 & 91.77 & 91.12 & 93.16 & 92.94\\
OptMerge & Merged & 86.10 & 84.92 & 87.30 & 86.04 & 88.01 & 87.06\\
\rowcolor{mergeheirrow} & MergeHEIR & 89.70 & 89.29 & 91.40 & 90.73 & 92.61 & 92.33\\
\bottomrule
\end{tabular}
\end{table*}

\begin{table*}[t]
\caption{Complete POPE results for InternVL3.5-8B.}
\label{tab:appendix-pope-InternVL35-8B}
\centering
\scriptsize
\setlength{\tabcolsep}{2.2pt}
\begin{tabular}{@{}C{70pt}C{38pt}|C{30pt}C{30pt}|C{30pt}C{30pt}|C{30pt}C{30pt}@{}}
\toprule
\multirow[c]{2}{*}{Merging method} & \multirow[c]{2}{*}{Stage} & \multicolumn{2}{c|}{POPE Adversarial} & \multicolumn{2}{c|}{POPE Popular} & \multicolumn{2}{c}{POPE Random}\\
 & & Acc.$\uparrow$ & F1$\uparrow$ & Acc.$\uparrow$ & F1$\uparrow$ & Acc.$\uparrow$ & F1$\uparrow$\\
\midrule
Task Arithmetic & Merged & 73.93 & 78.61 & 78.43 & 81.62 & 92.20 & 92.68\\
\rowcolor{mergeheirrow} & MergeHEIR & 77.50 & 80.64 & 82.20 & 84.04 & 93.40 & 93.61\\
TIES Merging & Merged & 76.53 & 80.02 & 81.50 & 83.56 & 92.85 & 93.13\\
\rowcolor{mergeheirrow} & MergeHEIR & 80.63 & 82.52 & 85.73 & 86.35 & 94.60 & 94.48\\
DELLA Merging & Merged & 75.33 & 79.36 & 79.73 & 82.40 & 92.41 & 92.79\\
\rowcolor{mergeheirrow} & MergeHEIR & 79.13 & 81.51 & 83.77 & 85.02 & 93.81 & 93.90\\
Breadcrumbs Merging & Merged & 75.07 & 79.21 & 79.40 & 82.18 & 92.34 & 92.74\\
\rowcolor{mergeheirrow} & MergeHEIR & 79.17 & 81.49 & 83.83 & 84.96 & 94.02 & 94.01\\
Model Stock & Merged & 73.30 & 78.23 & 77.43 & 80.96 & 92.03 & 92.54\\
\rowcolor{mergeheirrow} & MergeHEIR & 77.23 & 80.46 & 81.70 & 83.67 & 93.47 & 93.67\\
Iso-C & Merged & 73.47 & 78.33 & 77.50 & 81.00 & 92.03 & 92.54\\
\rowcolor{mergeheirrow} & MergeHEIR & 77.07 & 80.35 & 81.73 & 83.70 & 93.40 & 93.61\\
WUDI Merging & Merged & 78.23 & 80.93 & 83.13 & 84.56 & 92.78 & 92.96\\
\rowcolor{mergeheirrow} & MergeHEIR & 82.37 & 83.31 & 87.60 & 87.44 & 94.47 & 94.37\\
OptMerge & Merged & 80.80 & 82.61 & 85.07 & 85.93 & 93.02 & 93.09\\
\rowcolor{mergeheirrow} & MergeHEIR & 86.60 & 88.18 & 90.67 & 91.46 & 94.60 & 94.87\\
\bottomrule
\end{tabular}
\end{table*}

Tables~\ref{tab:appendix-pope-Qwen25-VL-3B}--\ref{tab:appendix-pope-InternVL35-8B} report accuracy and F1 on all three POPE splits. Across all 24 cases, MergeHEIR improves both metrics on every split.

\begin{table*}[t]
\caption{Compatibility of MergeHEIR with alternative hallucination-oriented preference methods. ``+ HEIR'' denotes augmenting the corresponding method with the null-space projection mechanism of MergeHEIR.}
\label{tab:plugin-compatibility}
\centering
\scriptsize
\setlength{\tabcolsep}{2.8pt}
\begin{tabular}{cc|C{38pt}|ccC{22pt}|cc}
\toprule
\multirow{2}{*}{Model} &
\multirow{2}{*}{Setting} &
\multirow{2}{*}{\shortstack{Expert\\Avg.$\uparrow$}} &
\multicolumn{3}{c|}{AMBER} &
\multicolumn{2}{c}{Object HalBench} \\
& & &
CHAIR$\downarrow$ &
HalRate$\downarrow$ &
Cog$\downarrow$ &
CHAIR$_s\downarrow$ &
CHAIR$_i\downarrow$ \\
\midrule

\multirow{5}{*}{Qwen2.5-VL-3B}
& Merged & 60.83 & 8.8 & 59.6 & 7.1 & 36.0 & 9.0 \\
& RLAIF-V & 60.28 & 5.3 & 32.2 & 3.5 & 27.7 & 7.9 \\
& RLAIF-V + HEIR & 60.86 & 5.5 & 30.8 & 3.6 & 28.0 & 7.7 \\
& OPA-DPO & 60.15 & 4.5 & 21.2 & 1.8 & 21.3 & 6.1 \\
& OPA-DPO + HEIR & 60.90 & 4.6 & 21.5 & 2.0 & 21.7 & 6.2 \\
\midrule

\multirow{5}{*}{Qwen2.5-VL-7B}
& Merged & 65.30 & 4.9 & 24.3 & 1.4 & 41.0 & 9.0 \\
& RLAIF-V & 63.61 & 4.6 & 22.4 & 1.3 & 30.3 & 7.5 \\
& RLAIF-V + HEIR & 65.23 & 4.5 & 22.5 & 1.3 & 30.7 & 7.8 \\
& OPA-DPO & 63.32 & 4.1 & 19.6 & 1.0 & 24.0 & 6.3 \\
& OPA-DPO + HEIR & 65.18 & 4.2 & 20.0 & 1.0 & 24.7 & 6.1 \\
\midrule

\multirow{5}{*}{InternVL3.5-8B}
& Merged & 55.73 & 8.0 & 68.4 & 6.9 & 24.0 & 5.9 \\
& RLAIF-V & 54.30 & 7.6 & 65.8 & 6.8 & 23.0 & 5.6 \\
& RLAIF-V + HEIR & 55.58 & 7.6 & 66.2 & 6.6 & 22.7 & 5.7 \\
& OPA-DPO & 54.21 & 7.0 & 62.4 & 6.0 & 20.7 & 5.2 \\
& OPA-DPO + HEIR & 55.52 & 7.2 & 63.2 & 6.1 & 21.0 & 5.1 \\
\bottomrule
\end{tabular}
\end{table*}

\paragraph{Compatibility with alternative preference methods.}
To examine whether the retention benefit of MergeHEIR depends on its specific preference objective, we integrate its null-space projection mechanism with RLAIF-V \citep{yu2025rlaif} and OPA-DPO \citep{yang2025mitigating}. Across all six matched comparisons, direct adaptation with the underlying preference methods causes substantially greater degradation of inherited expert capabilities, whereas incorporating MergeHEIR's null-space projection preserves expert performance while maintaining essentially the same hallucination-mitigation benefit. These results indicate that the retention benefit of MergeHEIR is not tied to its specific preference objective and support its use as a plug-and-play mechanism for capability-preserving post-merge adaptation.

%% file: secs/A_case_study.tex
\section{Case Study}
\label{sec:case-study}
We provide qualitative comparisons among the merged model, MergeHEIR, and unconstrained post-merge adaptation on both expert tasks and hallucination benchmarks. As shown in Figs.~\ref{fig:case-chart}--\ref{fig:case-vqa}, unconstrained adaptation can substantially impair the capabilities inherited from the constituent experts, producing incorrect chart interpretations, flawed geometric reasoning, inaccurate text recognition, or erroneous visual answers. In contrast, MergeHEIR retains the task-specific knowledge of the merged model and produces responses that remain consistent with the visual evidence.

Figs.~\ref{fig:case-pope} and~\ref{fig:case-objhalbench} further compare the three model variants on hallucination evaluation examples. The merged model tends to introduce nonexistent objects or visually unsupported scene details, whereas both post-merge adaptation methods effectively suppress such hallucinations. Notably, MergeHEIR achieves this behavioral correction while avoiding the pronounced expertise degradation exhibited by unconstrained adaptation. These examples qualitatively corroborate the quantitative results, demonstrating that MergeHEIR achieves a more favorable trade-off between hallucination mitigation and expertise retention.

\begin{figure*}[p]
    \centering
    \includegraphics[
        page=1,
        width=\textwidth,
        height=0.92\textheight,
        keepaspectratio
    ]{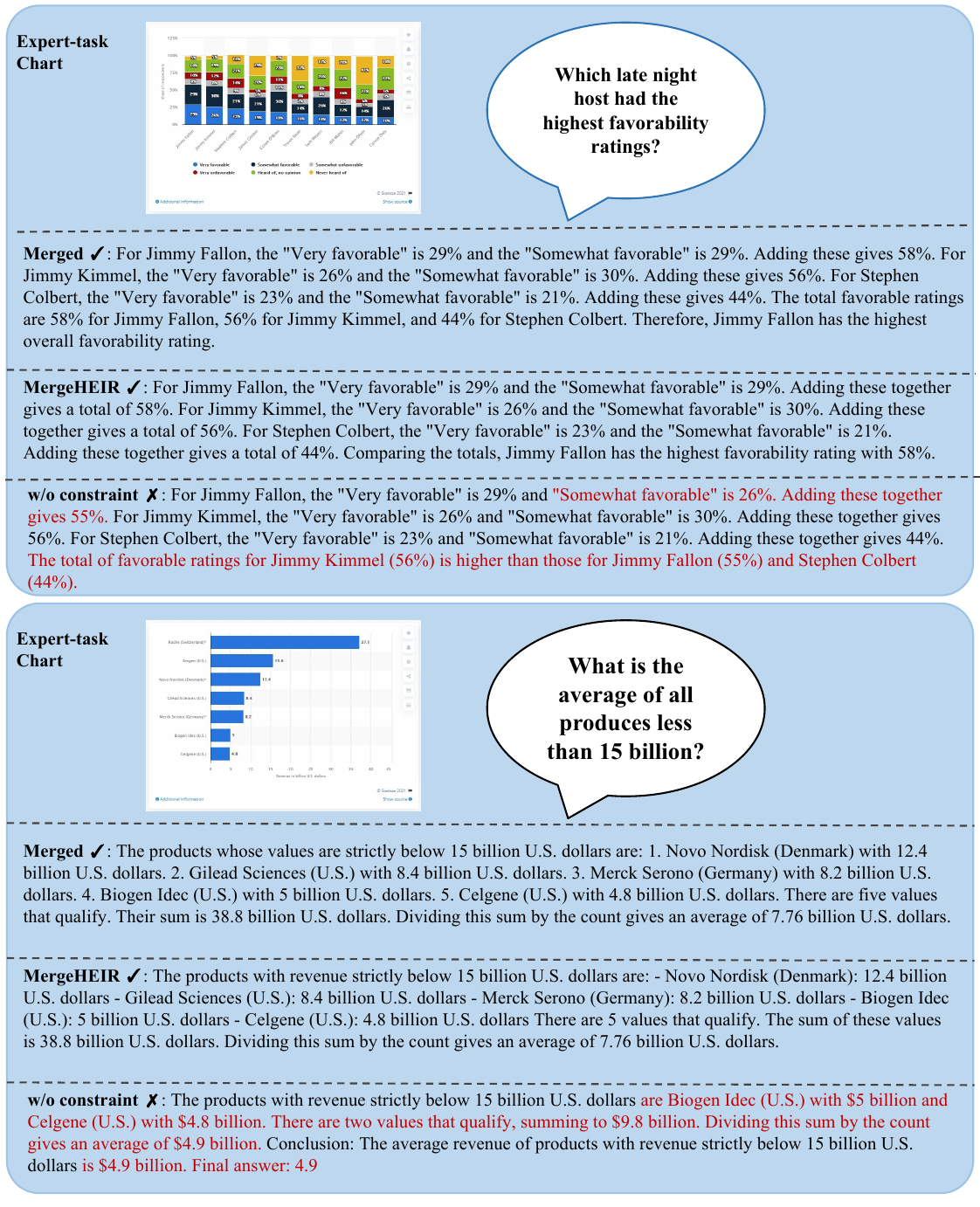}
    \caption{
        Visualization results comparing the merged model, MergeHEIR, and unconstrained post-merge adaptation on chart understanding task. Incorrect or visually unsupported response spans are highlighted in \textcolor{red}{red}.
    }
    \label{fig:case-chart}
\end{figure*}

\begin{figure*}[p]
    \centering
    \includegraphics[
        page=2,
        width=\textwidth,
        height=0.92\textheight,
        keepaspectratio
    ]{figs/case_studies_visualization.pdf}
    \caption{
        Visualization results comparing the merged model, MergeHEIR, and unconstrained post-merge adaptation on geometry reasoning task. Incorrect or visually unsupported response spans are highlighted in \textcolor{red}{red}.
    }
    \label{fig:case-geometry}
\end{figure*}

\begin{figure*}[p]
    \centering
    \includegraphics[
        page=3,
        width=\textwidth,
        height=0.92\textheight,
        keepaspectratio
    ]{figs/case_studies_visualization.pdf}
    \caption{
        Visualization results comparing the merged model, MergeHEIR, and unconstrained post-merge adaptation on OCR task. Incorrect or visually unsupported response spans are highlighted in \textcolor{red}{red}.
    }
    \label{fig:case-ocr}
\end{figure*}

\begin{figure*}[p]
    \centering
    \includegraphics[
        page=4,
        width=\textwidth,
        height=0.92\textheight,
        keepaspectratio
    ]{figs/case_studies_visualization.pdf}
    \caption{
    Visualization results comparing the merged model, MergeHEIR, and unconstrained post-merge adaptation on VQA task. Incorrect or visually unsupported response spans are highlighted in \textcolor{red}{red}.
    }
    \label{fig:case-vqa}
\end{figure*}

\begin{figure*}[p]
    \centering
    \includegraphics[
        page=5,
        width=\textwidth,
        height=0.92\textheight,
        keepaspectratio
    ]{figs/case_studies_visualization.pdf}
    \caption{
    Visualization results comparing the merged model, MergeHEIR, and unconstrained post-merge adaptation on POPE. Hallucinated or visually unsupported content is highlighted in \textcolor{red}{red}.
    }
    \label{fig:case-pope}
\end{figure*}

\begin{figure*}[p]
    \centering
    \includegraphics[
        page=6,
        width=\textwidth,
        height=0.92\textheight,
        keepaspectratio
    ]{figs/case_studies_visualization.pdf}
    \caption{
      Visualization results comparing the merged model, MergeHEIR, and unconstrained post-merge adaptation on ObjHalBench. Hallucinated objects and visually unsupported scene details are highlighted in \textcolor{red}{red}.
    }
    \label{fig:case-objhalbench}
\end{figure*}

%% file: secs/X_statements.tex
\section{Impact Statement}
This paper takes an early step toward mitigating hallucinations in merged MLLMs while preserving inherited expert capabilities. The goal is to improve the reliability of post-merge adaptation and support more trustworthy multimodal systems. The broader societal impacts are largely consistent with those associated with improving the robustness and reliability of machine learning systems, and we do not identify additional consequences that require specific discussion.

%% file: secs/A_llm_use.tex
\section{Statement on LLM assistance}
An LLM assistant was used for writing refinement and formatting adjustments.